\theoremstyle{plain}
\newtheorem{theorem}{Theorem}
\newtheorem{definition}[theorem]{Definition}
\newtheorem{lemma}[theorem]{Lemma}
\newtheorem{corollary}[theorem]{Corollary}
\newtheorem{remark}[theorem]{Remark}
\newcommand{\Alg}{\mathrm{Alg}} 
\newcommand{\di}{\mathrm{d}} 
\newcommand{\err}{\mathrm{err}} 
\newcommand{\id}{\mathrm{Id}} 
\newcommand{\ie}{\textit{i.e.}} 
\newcommand{\Lip}{\operatorname{Lip}} 
\newcommand{\N}{\mathbb{N}} 
\newcommand{\R}{\mathbb{R}} 
\newcommand{\relu}{\mathrm{ReLU}} 
\newcommand{\supp}{\mathrm{supp}} 
\newcommand{\Z}{\mathbb{Z}} 
\title[]{The sampling complexity of learning invertible residual neural networks}
\author{Yuanyuan Li}
\address{Fudan University, School of Mathematical Sciences, 220 Handan Road, Shanghai, 200433, China }
\email{liyuanyuan20@fudan.edu.cn}
\author{Philipp Grohs}
\address{University of Vienna, Faculty of Mathematics and Research Network Data Science @ Uni Vienna, Kolingasse 14-16,
    1090 Wien}
\email{philipp.grohs@univie.ac.at}
  \author{Philipp Petersen}
\address{University of Vienna, Faculty of Mathematics and Research Network Data Science @ Uni Vienna, Kolingasse 14-16,
    1090 Wien}
\email{philipp.petersen@univie.ac.at}
\date{\today}
\begin{document}

\begin{abstract}
In recent work it has been shown that determining a feedforward ReLU neural network to within high uniform accuracy from point samples suffers from the curse of dimensionality in terms of the number of samples needed. As a consequence, feedforward ReLU neural networks are of limited use for applications where guaranteed high uniform accuracy is required.  

We consider the question of whether the sampling complexity can be improved by restricting the specific neural network architecture. To this end, we investigate invertible residual neural networks which are foundational architectures in deep learning and are widely employed in models that power modern generative methods.
Our main result shows that the residual neural network architecture and invertibility do not help overcome the complexity barriers encountered with simpler feedforward architectures.
Specifically,  we demonstrate that the computational complexity of approximating invertible residual neural networks from point samples in the uniform norm suffers from the curse of dimensionality. Similar results are established for invertible convolutional Residual neural networks.


    
\end{abstract}
\maketitle


\section{Introduction}\label{se:intro}

The central problem of supervised learning is to approximate functions -- which are contained in, or well approximated by a certain model -- based only on information from point samples.
However, for several models currently in use -- for instance feedforward ReLU neural networks \cite{BGV2023} -- the curse of dimensionality arises as the dimension of the input space increases if accuracy in the uniform norm is desired. This means the number of sample points required to accurately approximate the function grows exponentially in the input dimension. 
The curse of dimensionality, therefore, poses significant challenges to achieving good performance in high-dimensional spaces with limited samples.
This manuscript investigates whether imposing specific constraints on the underlying model, such as using invertible residual neural networks and fixing the architecture of neural network models, can reduce the number of sample points needed for accurate approximation. 
We find that, under the uniform norm, this is not the case. 
A lower bound is established on the optimal error for approximating both general and convolutional invertible residual networks, demonstrating the curse of dimensionality on sample point requirements under the uniform norm, even if strong restrictions on the underlying neural network architecture are imposed.

The residual neural network (ResNet) is a crucial architecture used in deep learning, known for addressing the vanishing gradient problem \cite{BSF1994, H1998, BJZP2020} and the degradation problems \cite{HZRS2016} that arise when training very deep neural networks. 
By mitigating these challenges, ResNets make the training more stable and converge better.  
Residual connections are widely adopted in generative artificial networks \cite{GPMXWOCB2014}, image classification \cite{XGDTH2017}, and object detection \cite{RHGS2017}. Additionally, ResNets are fundamental components in popular transformer models \cite{VSPUJGKP2017}, such as GPTs and BERT.  


Invertible neural networks are particularly valuable for tasks requiring reversibility.
Invertible residual neural networks (i-ResNets) \cite{BGCDJ2019}, a variant of traditional residual neural networks, are a type of invertible neural network made possible by specific constraints on the residual blocks. I-ResNets are widely used in various applications, such as generative models \cite{BLLW2022}, 
signal and image processing \cite{XQC2021}, and solving inverse problems \cite{AKWRPKMRK2019}. Convolutional layers \cite{GWKMSSLWWCC2018}, essential components in neural networks, are suited for tasks involving data with locally highly correlated features, such as images and videos \cite{XQC2021, KD2018}. In this manuscript, we will analyze invertible residual neural networks and invertible convolutional neural networks whose residual network blocks are composed of convolutional layers.

An important area of mathematical analysis in deep learning is the study of the expressive power and approximation capabilities of neural networks. 
Classical approximation theory examines how well a target function can be approximated by a simpler function \cite{devore1998nonlinear}. 
The expressivity of shallow neural networks with various activation functions has been extensively studied \cite{C1989, H1991, guhring2022expressivity}. 
Specifically, studies such as \cite{B1993, X2020, EMW2022, SX2022, SX2023} provide convergence rates in terms of the number of neurons in shallow neural networks within various function spaces. 
For deep neural networks, \cite{GKNV2022} explores their expressivity through approximation spaces. Other works, such as \cite{BGKP2019, EPGB2021, yarotsky2017error, petersen2018optimal, lu2021deep}, investigate the connectivity and memory requirements for achieving certain approximation accuracy in deep neural networks. 
In the context of residual neural networks, the universal approximation theory is discussed in \cite{KC2021, LJ2018, TG2023}, while the approximation capabilities of residual networks composed of convolutional blocks are examined in \cite{OS2019}. 
Additionally, approximation results for invertible neural networks are presented in \cite{ZGUA2020}. However, these theories typically do not address the computational complexity of identifying the approximant from limited data. 

From a practical standpoint, \cite{LLMP2023} introduces a regularization method to address derivative approximation with noisy data for shallow neural networks, while the computational complexity of point-sample-based algorithms for approximation of deep neural networks is explored in \cite{GV2023, BGV2023}.

In a similar vein, we seek to determine whether the number of point samples required for approximation can be controlled by imposing additional constraints on the function set, such as fixing the architecture to invertible residual neural networks. This work provides a negative answer. 
Specifically, we establish a lower bound of the optimal error introduced in Section \ref{se:curse} for the approximation of general invertible residual neural networks (Theorem \ref{thm:i-ResNets}) and invertible convolutional residual neural networks (Theorem \ref{thm:i-Conv-ResNets}). 
This lower bound reveals the curse of dimensionality associated with the number of sample points during training, particularly when using the uniform norm. More precisely we show that the minimal number of samples needed to reliably learn a given invertible (convolutional) residual neural network to within a given accuracy in the uniform norm scales exponentially in the input dimension. 
Moreover, this bound applies to all practical training methods, including stochastic gradient descent and its variants. This means that the resulting learning problem is intractable, and additional regularization methods and neural network architectures will have to be developed in order to render accurate (w.r.t. the uniform norm) learning tractable. 

The outline of this manuscript is as follows: Section \ref{se:i-ResNets} gives the basic introduction of i-ResNets and related definitions. Section \ref{se:hat} introduces the hat function \cite{BGV2023} along with variants of this function that we designed to use in invertible (convolutional) residual neural network blocks.  Section \ref{se:curse} uses the properties of the hat function to compute the lower bound on the optimal error for all practical algorithms. Finally, Section \ref{se:summary} summarizes the results and discusses several open problems for future research. 

We conclude this section by introducing the notation that will be used throughout the manuscript. 
\begin{itemize}
    \item For an operator $f$ on a $d$-dimensional domain, we denote by $\Lip(f)$ the Lipschitz constant of $f$ with respect to the $\|\cdot\|_{\infty}$ norm. In our theorems, the $\|\cdot\|_{\infty}$ can be replaced by any other norm with a scaling related to $d$ in the implied constant since we only consider the case $d<\infty$ and the equivalence of norms on finite-dimensional space. 
    \item Let $\sigma:\R\to\R$ be a function and $d\in \N$. For every $z\in\R^{d}$, we define $\sigma(z)$ as $\sigma\left(z\right) = \begin{bmatrix}
        \sigma\left(z_1\right) & \sigma\left(z_2\right) & \cdots & \sigma\left(z_{d}\right)
    \end{bmatrix}$.
    \item All the vectors in this paper are row vectors by default.
    \item $\vec{1} = \begin{bmatrix}
    1 & 1 & \cdots & 1 
\end{bmatrix}_{1\times d}$, $\vec{0} = \begin{bmatrix}
    0 & 0 & \cdots & 0
\end{bmatrix}_{1\times d}$ and $I$ is the identity matrix, $\ie$, $I_{ij}=1$ if and only if $i=j$, otherwise $I_{ij}=0$.  
    \item $\id$ is the identity transform, $\ie$ $\id(x) = x$.
\end{itemize}

\section{Invertible residual neural networks}\label{se:i-ResNets}

Neural networks are inspired by the structure of the human brain, with architectures that can range in complexity. Numerous methods exist for constructing and analyzing these networks, and for a more comprehensive introduction, we refer readers to \cite{devore1998nonlinear, GK2022}. Among the most efficient and commonly employed approaches for building neural networks is the use of layers and blocks. In this work, we introduce two widely used types of layers.
\begin{definition}[Feedforward layers]
    Let $n_1, n_2\in\N$, $W\in \R^{n_2\times n_1}$, $b\in \R^{n_2}$ and a function $\sigma:\R\to \R$. We call $T:\R^{n_1}\to \R^{n_2}$ a feedforward layer with activation function $\sigma$ if for every $x\in\R^{n_1}$,
    \[
    T\left(x\right) = \sigma\left( xW^{T} + b \right).
    \]
    We define the $W$ as the weight of the feedforward layer and the $b$ as the bias of the feedforward layer.  
\end{definition}

\begin{definition}[Convolutional layers]\label{def:convLayer}
    Let $d, n_{1}, n_{2}\in \N$, $b\in \R^{n_{2}}$, a function $\sigma:\R\to \R$, and $\kappa_{i,j}\in \R^d$ for $i=1,\cdots,n_{1}$ and $j=1,\cdots,n_{2}$. 
    We call $T:\left(\R^{d}\right)^{n_1}\to \left(\R^{d}\right)^{n_2}$ a convolutional layer with activation function $\sigma$ if for every $\left\{x_{(i)}\right\}_{i=1}^{n_1}\in \left(\R^{d}\right)^{n_1}$,
    \[
    T\left(\left\{x_{(i)}\right\}_{i=1}^{n_{1}}\right) = \left\{  \sigma\left(\sum_{i=1}^{n_{1}} \kappa_{i,j}*x_{(i)} + b_j\vec{1}\right) \right\}_{j=1}^{n_{2}},
    \]
    where $u*v\in \R^d$ is the convolution of $u\in\R^d$ and $v\in \R^d$, defined as 
    \[
    \left(u*v\right)_i = \sum_{j=1}^{d}u_{i-j}v_{j}, \text{ with }u_{i}=u_{i+kd}, \text{for all }k\in\Z.
    \]
    The data size of the convolutional layer is $d$, the number of input channels of the convolutional layer is $n_{1}$, the number of output channels of the convolutional layer is $n_{2}$, the bias of the convolutional layer is $b$ and the kernels of the convolutional layer are $\left\{\kappa_{i,j}\right\}$ for $i=1,\cdots,n_1$ and $j=1,\cdots, n_2$. 
\end{definition}
Feedforward layers are a fundamental building block of neural networks, while convolutional layers can be seen as a specialized form of feedforward layers with sparse weights and biases. Convolutional layers are particularly effective for image processing tasks due to their beneficial properties such as translation equivariance, which will be discussed in Section \ref{se:hat}. Traditionally, configuring a convolutional layer involves specifying parameters like kernel size, stride, padding type, and pooling \cite{GWKMSSLWWCC2018}. For simplicity, in this work, we use a kernel size equal to the data size, a stride of $1$, periodic padding, and omit pooling. By stacking these layers, we form a multilayer neural network.
\begin{definition}[Multilayer neural networks]
    Let $L\in \N$, $n_{\ell}\in \N$ for $\ell=0,1,\cdots, L+1$ and a function $\sigma:\R\to\R$. Suppose $T_{\ell}:\R^{n_{\ell-1}}\to \R^{n_{\ell}}$ is a feedforward layer with activation function $\sigma$ for $\ell=1,\cdots,L$ and $T_{L+1}:\R^{n_{L}}\to \R^{n_{L+1}}$ is a feedforward layer with activation function $\id$. 
    We call $F:\R^{n_{0}}\to\R^{n_{L+1}}$ a feedforward neural network with activation function $\sigma$ if $F=T_{L+1}\circ T_{L}\circ \cdots \circ T_{1}$. The hidden layers of $F$ are $\left\{T_{\ell}\right\}_{\ell=1}^{L}$, and the output layer of $F$ is $T_{L+1}$. The architecture of $F$ is $\operatorname{Arc_{FNN}}(F)=(n_0,n_1,\cdots,n_{L+1};\sigma)$. 
    
    Correspondingly, let $d, L\in \N$, $n_{\ell}\in \N$ for $\ell=0,1,\cdots, L+1$, and a function $\sigma:\R\to \R$. Suppose $T_{\ell}:\left(\R^{d}\right)^{n_{\ell-1}}\to \left(\R^{d}\right)^{n_{\ell}}$ is a convolutional layer with activation function $\sigma$ for $\ell=1,\cdots,L$ and $T_{L+1}:\left(\R^{d}\right)^{n_{L}}\to \left(\R^{d}\right)^{n_{L+1}}$ is a convolutional layer with activation function $\id$. 
    We call $F:\left(\R^{d}\right)^{n_0}\to \left(\R^{d}\right)^{n_{L+1}}$ a convolutional neural network with activation function $\sigma$ if $F=T_{L+1}\circ T_{L}\circ \cdots \circ T_{1}$. The hidden layers of $F$ are $\left\{T_{\ell}\right\}_{l=\ell}^{L}$, and the output layer of $F$ is $T_{L+1}$. The architecture of $F$ is $\operatorname{Arc_{CNN}}(F)=(d;n_0,n_1,\cdots,n_{L+1};\sigma)$. 
\end{definition}

Training multilayer neural networks becomes increasingly challenging as the number of layers grows \cite{GB2010}, primarily due to the vanishing gradient problem \cite{BSF1994, H1998, BJZP2020}. Residual neural networks, introduced in \cite{HZRS2016}, effectively mitigate this issue by incorporating identity transformations. We define a residual neural network as follows.
\begin{definition}[ResNets]\label{def:ResNet}
    Let $d, L\in\N$. 
    We call $\mathcal{T}:\R^{d}\to \R^{d}$ an $L$-block residual neural network (ResNet) if for every $x\in \R^{d}$,
    \[
    \mathcal{T}(x) = x^{(L)},
    \]
    where $x^{(0)} = x$, 
    \[
    x^{(\ell)} = G_{l}(x^{(\ell-1)}) + x^{(\ell-1)}, \quad \ell=1,\cdots, L,
    \]
    and $G_{\ell}:\R^{d}\to \R^d$ is a map. We call $G_1, \dots, G_L$ the residual functions of $\mathcal{T}$.    
\end{definition}
The choice of $G_{\ell}$ in Definition \ref{def:ResNet} depends on the application. 
For example, in \cite[Figure 2]{HZRS2016}, the map $G_{\ell}$ is a feedforward $\relu$ neural network with a single hidden layer, while in \cite[Figure 3]{HZRS2016}, it is a convolutional neural network with two hidden layers.

Invertible residual neural networks, as introduced in \cite{BGCDJ2019}, can be used for classification, density estimation, and generative modelling within the same model. This means the same network can handle both supervised learning tasks (like image classification) and unsupervised tasks (like generating or estimating data distributions).
We provide a simple definition below.
\begin{definition}[i-ResNets]\label{def:iResNet}
    Let $d,L\in \N$. We call $\mathcal{T}:\R^{d}\to \R^{d}$ an $L$-block invertible residual neural network (i-ResNets) if $\mathcal{T}$ is a ResNet with residual functions $G_1, \dots, G_L$ satisfying that $\Lip\left(G_{\ell}\right)<1$ for $\ell=1,2,\cdots, L$.  
    Specifically, we call the i-ResNet $\mathcal{T}$ an invertible convolutional residual neural network (i-Conv-ResNet) if $G_{\ell}$ is a convolutional neural network with one input channel and one output channel for $\ell=1,\cdots, L$. 
\end{definition}
The condition $\Lip\left(G_{\ell}\right)<1$ in the Definition \ref{def:iResNet} is a sufficient condition for the invertibility of the ResNet block $\left(G_{\ell} + \id\right)$. 

\section{The hat function for the ResNet block}\label{se:hat}
From now on, the function $\sigma$ is fixed to be a ReLU function, $\ie$ $\sigma(x)=\max\left\{0,x\right\}$ for all $x\in \R$. 
In this section, we introduce a hat function and its variants to incorporate into an i-ResNet block and an i-Conv-ResNet block, respectively. 
The hat function, as proposed in \cite[Subsection A.1]{BGV2023}, is considered here with support restricted to a compact domain and with an additional scaling factor. 
\begin{definition}[The hat function]\label{def:hat_func}
    Let $d\in\N$, $c, M>0$ and $t^{*}\in\R$, define
    \begin{equation}\label{eq:Lambda_tMc}
        \Lambda_{t^{*}, M, c}:\R\to \R, \quad t\mapsto \begin{cases}
        0, & t<t^{*}-\frac{1}{M},\\
        c-cM\left|t-t^{*}\right|, & t\ge t^{*}-\frac{1}{M}.
    \end{cases}
    \end{equation}
    Let $z\in\R^d$, define
    \begin{equation}\label{eq:Delta_zMc}
        \Delta_{z, M, c}:\R^{d}\to \R,\quad x\mapsto  \left(\sum_{i=1}^{d} \Lambda_{z_i, M, c}\left(x_{i}\right)\right)-(d-1)c,
    \end{equation}
    and 
    \[
    \vartheta_{z, M, c}:\R^{d}\to \R,\quad x\mapsto \sigma\left( \Delta_{z, M, c}(x) \right).
    \]
    We call $\vartheta_{z, M, c}$ the hat function. 
\end{definition}
The name "hat" vividly describes the shape of the graph of the hat function. It is compactly supported and features a single peak.  
\begin{lemma}\label{prpt:supp_hat}
Let $d\in \N$, $z\in \R^{d}$, and $c,M>0$. The hat function $\vartheta_{z, M,c}:\R^{d}\to \R$ satisfies the following properties,
\begin{enumerate}[label=(\alph*)]
    \item $\supp~\vartheta_{z,M,c} \subset z + \left[-\frac{1}{M},\frac{1}{M}\right]^{d}$ and $0\le \vartheta_{z,M,c}(x)\le c$ for all $x\in \R^{d}$,
    \item $\vartheta_{z,M,c}(x)\ge \frac{c}{2}$ if $x\in z + \left[-\frac{1}{2dM},\frac{1}{2dM}\right]^{d}$.
\end{enumerate}
\end{lemma}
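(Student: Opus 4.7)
The plan is to unfold the definitions and treat (a) and (b) by straightforward pointwise estimates on $\Delta_{z,M,c}$, then apply the ReLU.

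For part (a), note that the nonnegativity $\vartheta_{z,M,c}\ge 0$ is immediate from $\sigma=\relu$. For the upper bound, observe from \eqref{eq:Lambda_tMc} that $\Lambda_{z_i,M,c}(t)\le c$ for every $t\in\R$ (the function attains its maximum $c$ at $t=z_i$ and is no larger elsewhere). Summing over the $d$ coordinates and subtracting $(d-1)c$ yields $\Delta_{z,M,c}(x)\le c$, hence $\vartheta_{z,M,c}(x)\le c$. For the support claim, I would argue by contrapositive: suppose $x\notin z+[-1/M,1/M]^{d}$, so there exists an index $i_0$ with $|x_{i_0}-z_{i_0}|>1/M$. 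Two subcases arise. If $x_{i_0}<z_{i_0}-1/M$, the definition in \eqref{eq:Lambda_tMc} gives $\Lambda_{z_{i_0},M,c}(x_{i_0})=0$, and the remaining $d-1$ terms are each $\le c$, so $\Delta_{z,M,c}(x)\le(d-1)c-(d-1)c=0$. If $x_{i_0}>z_{i_0}+1/M$, then $\Lambda_{z_{i_0},M,c}(x_{i_0})=c-cM(x_{i_0}-z_{i_0})<0$, and bounding the remaining terms by $c$ shows $\Delta_{z,M,c}(x)<0$. In either case $\vartheta_{z,M,c}(x)=\sigma(\Delta_{z,M,c}(x))=0$.

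For part (b), let $x\in z+[-1/(2dM),1/(2dM)]^{d}$, so in particular $|x_i-z_i|\le 1/(2dM)<1/M$ for every $i$, which places each $x_i$ in the region where $\Lambda_{z_i,M,c}(x_i)=c-cM|x_i-z_i|$. Plugging in the bound on $|x_i-z_i|$ gives $\Lambda_{z_i,M,c}(x_i)\ge c-\tfrac{c}{2d}$ for every $i$. Summing over $i=1,\dots,d$ and subtracting $(d-1)c$ collapses to $\Delta_{z,M,c}(x)\ge dc-\tfrac{c}{2}-(d-1)c=\tfrac{c}{2}$, and since this is positive, applying $\sigma$ leaves the value unchanged.

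There is no real obstacle in this proof; the only mild subtlety is remembering that $\Lambda_{z_i,M,c}$ is allowed to take negative values for $t>z_i+1/M$, which is why the support argument requires the two separate subcases rather than a single monotonicity argument. The rest is routine bookkeeping with the sum of tent contributions offset by $(d-1)c$.
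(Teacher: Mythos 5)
Your proof is correct and follows essentially the same route as the paper's: unfold $\Delta_{z,M,c}$, bound each tent $\Lambda_{z_i,M,c}$ pointwise, and apply the ReLU. The only cosmetic difference is that you split the out-of-support case into the two subcases $x_{i_0}<z_{i_0}-1/M$ and $x_{i_0}>z_{i_0}+1/M$, whereas the paper subsumes both in the single observation $\Lambda_{t^*,M,c}(t)\le 0$ for $t\notin[t^*-1/M,t^*+1/M]$; the content is identical.
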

\begin{proof}
    Let $t^{*}\in \R$, 
    define $\Lambda_{t^{*}, M, c}$ as \eqref{eq:Lambda_tMc}. Then $\Lambda_{t^{*},M,c}(t)\le 0$ when $t\notin \left[t^{*}-\frac{1}{M},t^{*}+\frac{1}{M}\right]$ and $\Lambda_{t^{*},M,c}\left(t\right)\le c$ for all $t\in\R$. 
    Thus, if $x\notin z+\left[-\frac{1}{M},\frac{1}{M}\right]^{d}$, there is an index $i$ such that $x_i\notin \left[z_i-\frac{1}{M},z_i+\frac{1}{M}\right]$, $\ie$ $\Lambda_{z_i,M,c}\left(x_i\right)\le 0$. 
    Define $\Delta_{z,M,c}$ as \eqref{eq:Delta_zMc}. 
    Then, 
    \[
    \Delta_{z,M,c}\left(x\right) = 
    \sum_{j\neq i} \Lambda_{z_j,M,c}\left(x_j\right) + \Lambda_{z_i,M,c}\left(x_i\right) - \left(d-1\right)c \le 0.
    \]
    Therefore, $\vartheta_{z,M,c}\left(x\right)=0$. This implies that $\{x \in \R^d \colon \vartheta_{z,M,c}\left(x\right) \neq 0\} \subset z+\left[-\frac{1}{M},\frac{1}{M}\right]^{d}$ and since $z+\left[-\frac{1}{M},\frac{1}{M}\right]^{d}$ is a closed set we have that $\supp~ \vartheta_{z,M,c}\subset z+\left[-\frac{1}{M},\frac{1}{M}\right]^{d}$.  
    At the same time, $\Delta_{z,M,c}\left(x\right)\le dc - \left(d-1\right)c \le c$ if $x\in z+\left[-\frac{1}{M},\frac{1}{M}\right]^{d}$. Thus, $0\le \vartheta_{z,M,c}\left(x\right)\le c$. 

    Additionally, if $x\in z + \left[-\frac{1}{2dM},\frac{1}{2dM}\right]^{d}$, 
    \[
    \Lambda_{z_i,M,c}\left(x_i\right)\ge c-\frac{cM}{2dM} = \frac{\left(2d-1\right)c}{2d}, \quad \text{for all } i=1,2,\cdots,d,
    \] 
    and 
    \[
    \Delta_{z,M,c}\left(x\right)\ge \frac{\left(2d-1\right)c}{2}-\left(d-1\right)c=\frac{c}{2}.
    \]
    Then,     $\vartheta_{z,M,c}\left(x\right)\ge \frac{c}{2}$.
\end{proof}
There is a ReLU feedforward neural network with two hidden layers that represent the hat function. It is established as follows. 
\begin{equation}\label{eq:vartheta_fnn}
    \begin{split}
        \vartheta_{z,M,c}(x) & = \sigma\left(\left(\sigma\left(\left(x-z+M^{-1}\vec{1}\right)cM\right) - \sigma\left(2\left(x-z\right)cM\right)\right)\left(\vec{1}\right)^{T}-(d-1)c\right)\\
        & = \sigma\left( \sigma\left(xW_1^{T}+b_1\right)W_2^{T} + b_2 \right)W_3^{T}+b_3,
    \end{split}
\end{equation}
where 
\begin{equation}\label{eq:params_vartheta}
W_1 = \begin{bmatrix}
    cMI\\
    2cMI
\end{bmatrix}_{2d\times d}, 
W_2 = \begin{bmatrix}
    \vec{1} & -\vec{1}
\end{bmatrix}_{1\times 2d},\\
b_1 = \begin{bmatrix}
    c\vec{1} - cMz & -2cMz
\end{bmatrix}_{1\times 2d},
b_2 = -(d-1)c.
\end{equation}
and 
\[
W_3 = 1,\quad b_3=0.
\]
Moreover, the architecture of $\vartheta_{z,M,c}$ is $\operatorname{Arc_{FNN}}(\vartheta_{z,M,c})=(d,2d,1,1;\sigma)$. 
Next, we will show $\vartheta_{z,M,c}$ is Lipschitz. 
\begin{lemma}\label{prpt:Lip_vartheta} 
    Let $d\in \N$, $z\in \R^{d}$, $c, M>0$, and $\vartheta_{z, M,c}:\R^{d}\to \R$ be the hat function. Then
    $\vartheta_{z,M,c}$ is Lipschitz and $\Lip\left(\vartheta_{z,M,c}\right)\le 3cdM$. 
\end{lemma}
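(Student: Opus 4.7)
The plan is to bound the Lipschitz constants of the building blocks that make up $\vartheta_{z,M,c}$ and then combine them. Recall that the paper's convention (from the notation list) is that $\Lip$ is taken with respect to the $\|\cdot\|_\infty$ norm, and that $\vartheta_{z,M,c} = \sigma \circ \Delta_{z,M,c}$ with $\Delta_{z,M,c}(x) = \sum_{i=1}^d \Lambda_{z_i,M,c}(x_i) - (d-1)c$.

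First I would establish that each one-dimensional slope function $\Lambda_{t^{*},M,c}$ is $cM$-Lipschitz on $\R$. This is immediate from the piecewise description in \eqref{eq:Lambda_tMc}: on $(-\infty, t^{*}-1/M)$ it is constant, on $[t^{*}-1/M, t^{*}]$ it is linear with slope $cM$, and on $[t^{*}, \infty)$ it is linear with slope $-cM$; one checks continuity at the joint $t=t^{*}-1/M$ (both branches equal $0$) so the whole function is $cM$-Lipschitz.

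Next I would lift this to $\Delta_{z,M,c}$. For any $x,y\in\R^d$, the triangle inequality together with the coordinate-wise Lipschitz bound gives
\[
|\Delta_{z,M,c}(x)-\Delta_{z,M,c}(y)| \;\le\; \sum_{i=1}^d |\Lambda_{z_i,M,c}(x_i)-\Lambda_{z_i,M,c}(y_i)| \;\le\; cM \sum_{i=1}^d |x_i-y_i| \;\le\; cMd\,\|x-y\|_\infty,
\]
so $\Lip(\Delta_{z,M,c}) \le cdM$. Since ReLU is $1$-Lipschitz, composing with $\sigma$ preserves the Lipschitz constant, yielding $\Lip(\vartheta_{z,M,c}) \le cdM \le 3cdM$, which is the claimed bound.

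There is no real obstacle here; the only mild subtlety is remembering that the norm convention is $\|\cdot\|_\infty$, so the factor of $d$ appears when passing from the coordinate-wise $\ell^1$-type sum to the $\ell^\infty$-norm. I would note that the bound $3cdM$ stated in the lemma is deliberately loose (the tighter constant $cdM$ falls out of the same argument), which is consistent with the authors possibly deriving the bound by propagating operator norms through the two-hidden-layer representation \eqref{eq:vartheta_fnn} rather than through $\Delta_{z,M,c}$ directly. Either route produces a bound of the required form.
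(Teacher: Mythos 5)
Your proof is correct, and it takes a genuinely different route from the paper's. The paper proves the bound by propagating Lipschitz estimates through the feedforward representation \eqref{eq:vartheta_fnn}: it bounds $\left|\vartheta_{z,M,c}(x)-\vartheta_{z,M,c}(\tilde{x})\right| \le \left\|\sigma(xW_1^{T}+b_1)-\sigma(\tilde{x}W_1^{T}+b_1)\right\|_{1} \le \left\|(x-\tilde{x})W_1^{T}\right\|_{1} = 3cM\|x-\tilde{x}\|_{1} \le 3cdM\|x-\tilde{x}\|_{\infty}$, using the $1$-Lipschitzness of $\sigma$ and the fact that $W_2^{T}$ has entries $\pm 1$. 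The factor $3$ arises because $W_1$ stacks the blocks $cMI$ and $2cMI$, so the $\ell^1$-norm of $(x-\tilde{x})W_1^{T}$ picks up $cM+2cM=3cM$ per coordinate. Your argument instead works directly from the definitions of $\Lambda_{t^*,M,c}$ and $\Delta_{z,M,c}$, bypassing the network representation, and this yields the tighter constant $cdM$, which is then trivially at most $3cdM$. Both are valid; your route is slightly more elementary and sharper in the constant, while the paper's keeps everything phrased in terms of weight matrices, which is consistent with its later treatment of $\Theta_{z,M,c}^{v}$ and $\Phi_{z,M,c}^{v}$ as explicit network blocks. Your closing remark correctly anticipates the reason for the discrepancy in constants.
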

\begin{proof}
We observe that ReLU activation function $\sigma$ is Lipschitz with $\left|\sigma(a)-\sigma(b)\right|\le \left|a-b\right|$ for all $a,b\in \R$.  Let $x$ and $\tilde{x}$ be two points in $\R^d$, then according to the representation \eqref{eq:vartheta_fnn},
\begin{equation*}
    \begin{split}
        \left|\vartheta_{z,M,c}(x)-\vartheta_{z,M,c}(\tilde{x})\right| 
        & \le \left|\left(\sigma\left(xW_1^{T}+b_1\right) - \sigma\left(\tilde{x}W_1^{T}+b_1\right)\right)W_2^{T}\right|\\
        & \le \left\| \sigma\left(xW_1^{T} + b_1\right)-\sigma\left(\tilde{x}W_1^{T} + b_1\right) \right\|_{1}\\
        & \le \left\|\left(x-\tilde{x}\right)W_1^{T}\right\|_{1} = 3cM\|x-\tilde{x}\|_{1}\le 3cdM\|x-\tilde{x}\|_{\infty}.
    \end{split}
\end{equation*}
\end{proof}

The output of the hat function, as defined in Definition \ref{def:hat_func}, is one-dimensional. To ensure compatibility with the i-ResNet architecture, where the dimensionality of the output must match that of the input, we modify the hat function accordingly, preserving the dimensionality throughout the network.
\begin{definition}[The hat function for the i-ResNet block]\label{def:hat_func_i-ResNet}
    Let $d\in \N$, $M>0$, $0<c<\frac{1}{3dM}$, $v=\pm 1$, and $z\in \R^{d}$. 
    We call $\Theta_{z,M,c}^{v}:\R^{d}\to \R^d$ the hat function for the i-ResNet block if for $x\in \R^{d}$,
    \[
    \Theta_{z,M,c}^{v}(x) = \sigma\left(\sigma\left(xW_1^{T} + b_1^{T}\right)W_2^{T} + b_2\right)W_3^{T},
    \]
    where $W_1$, $W_2$, $b_1$ and $b_2$ are the same as in \eqref{eq:params_vartheta}, 
    while $W_{3} = \begin{bmatrix}
        v & 0 & \cdots & 0
    \end{bmatrix}^{T}$.
\end{definition}
Here we represent the hat function for the i-ResNet block by a feedforward neural network with architecture $\operatorname{Arc_{FNN}}(\Theta_{z,M,c}^{v})=(d,2d,1,d;\sigma)$. 
According to the definition, $\Theta_{z,M,c}^{v}(x) = \begin{bmatrix}
    v\vartheta_{z,M,c}(x) & 0 & \cdots & 0
\end{bmatrix}$. Then, we can derive the following properties. 
\begin{lemma}
    Let $d\in \N$, $M>0$, $0<c<\frac{1}{3dM}$, $v=\pm 1$, and $z\in \R^{d}$. Let $\vartheta_{z,M,c}$ be the hat function and $\Theta_{z,M,c}^{v}$ be the hat function for the i-ResNet block. Then,
    \begin{enumerate}[label=(\alph*)]
        \item $\supp~\Theta_{z,M,c}^{v} = \supp~\vartheta_{z,M,c}$ with $0\le \left\|\Theta_{z,M,c}^{v}\left(x\right)\right\|_{\infty}\le c$ for all $x\in \R^{d}$,
        \item $\left\|\Theta_{z,M,c}^{v}\left(x\right)\right\|_{\infty}\ge \frac{c}{2}$ if $x\in z+\left[-\frac{1}{2dM},\frac{1}{2dM}\right]^{d}$,
        \item $\Theta_{z,M,c}$ is Lipschitz and $\Lip\left(\Theta_{z,M,c}^{v}\right) = \Lip\left(\vartheta_{z,M,c}\right) = 3dcM<1$. 
    \end{enumerate}
\end{lemma}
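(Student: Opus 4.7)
The plan is to exploit the fact that, by unwinding the matrix product in the definition, $\Theta_{z,M,c}^{v}(x)$ is nothing but the scalar hat function $\vartheta_{z,M,c}(x)$ placed in the first coordinate and multiplied by the sign $v$, with all remaining coordinates equal to zero. Once this reduction is done, each of the three properties follows immediately from the corresponding property of $\vartheta_{z,M,c}$ already established in Lemmas \ref{prpt:supp_hat} and \ref{prpt:Lip_vartheta}.

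First I would verify the reduction. Comparing the definition of $\Theta_{z,M,c}^{v}$ with the neural-network representation \eqref{eq:vartheta_fnn} of $\vartheta_{z,M,c}$, the matrices $W_{1},W_{2},b_{1},b_{2}$ are identical, so the inner scalar expression $\sigma(\sigma(xW_{1}^{T}+b_{1})W_{2}^{T}+b_{2})$ equals $\vartheta_{z,M,c}(x)$. Multiplying on the right by $W_{3}=(v,0,\dots,0)^{T}$ and observing that this is a column vector gives the row vector
\[
\Theta_{z,M,c}^{v}(x)=\begin{bmatrix} v\,\vartheta_{z,M,c}(x) & 0 & \cdots & 0 \end{bmatrix}.
\]
Since $|v|=1$ and $\vartheta_{z,M,c}(x)\ge 0$ by Lemma \ref{prpt:supp_hat}, this identity yields $\|\Theta_{z,M,c}^{v}(x)\|_{\infty}=\vartheta_{z,M,c}(x)$ for every $x\in\R^{d}$.

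For (a), the support identity follows because a vector with only a first coordinate is nonzero iff that first coordinate is nonzero, and $v\ne 0$; the bound $0\le\|\Theta_{z,M,c}^{v}(x)\|_{\infty}\le c$ is then exactly Lemma \ref{prpt:supp_hat}(a). For (b), the reduction gives $\|\Theta_{z,M,c}^{v}(x)\|_{\infty}=\vartheta_{z,M,c}(x)\ge c/2$ on the cube $z+[-\frac{1}{2dM},\frac{1}{2dM}]^{d}$ by Lemma \ref{prpt:supp_hat}(b).

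For (c), for any $x,\tilde{x}\in\R^{d}$,
\[
\|\Theta_{z,M,c}^{v}(x)-\Theta_{z,M,c}^{v}(\tilde{x})\|_{\infty}=|v|\,|\vartheta_{z,M,c}(x)-\vartheta_{z,M,c}(\tilde{x})|=|\vartheta_{z,M,c}(x)-\vartheta_{z,M,c}(\tilde{x})|,
\]
so $\Lip(\Theta_{z,M,c}^{v})=\Lip(\vartheta_{z,M,c})$, which is at most $3dcM$ by Lemma \ref{prpt:Lip_vartheta}, and $3dcM<1$ follows from the assumption $c<\frac{1}{3dM}$. There is no real obstacle here; the only care needed is the bookkeeping that confirms the matrix product collapses exactly to $v\,\vartheta_{z,M,c}$ in the first coordinate, after which every claim is a one-line consequence of a prior lemma.
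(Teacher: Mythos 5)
Your proposal is correct and takes the same approach that the paper implicitly uses: the paper states the identity $\Theta_{z,M,c}^{v}(x)=\begin{bmatrix} v\,\vartheta_{z,M,c}(x) & 0 & \cdots & 0 \end{bmatrix}$ immediately after Definition \ref{def:hat_func_i-ResNet} and leaves the rest unproved, while you carry out the same reduction and then cite Lemmas \ref{prpt:supp_hat} and \ref{prpt:Lip_vartheta}. One small point worth flagging: the lemma as stated asserts $\Lip(\vartheta_{z,M,c})=3dcM$, but Lemma \ref{prpt:Lip_vartheta} only establishes the upper bound $\Lip(\vartheta_{z,M,c})\le 3dcM$; you correctly write $\le 3dcM$ in your argument, which is what is actually known and is all that is needed for the conclusion $<1$.
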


The hat function for the i-ResNet block is implemented as a feedforward neural network with two hidden layers. However, it cannot be represented by a convolutional neural network, as it lacks key properties of convolutional networks, such as translation equivariance, as discussed below.
\begin{definition}[Translation equivariant]\label{def:trans_equi}
    For $d\in\N$, let $R\in \R^{d\times d}$ be defined as 
    \begin{equation}\label{eq:shift}
         R = \begin{bmatrix}
        0 & 0 & \cdots & 0 & 1 \\
        1 & 0 & \cdots & 0 & 0 \\
        0 & 1 & \cdots & 0 & 0 \\
        \vdots & \vdots & \cdots & \vdots & \vdots \\
        0 & 0 & \cdots & 1 & 0
    \end{bmatrix}.
    \end{equation}
    Let the domain $\Omega\subset \R^{d}$ satisfy $\Omega R:=\left\{xR\mid x\in\Omega\right\} \subset \Omega$. The function $F:\Omega \to \R^{d}$ is translation equivariant if $F(x R) = F(x)R$. 
\end{definition}

In Definition \ref{def:trans_equi}, 
the $R$ represents a shift, as for $x\in \R^d$, the $xR = \begin{bmatrix}
    x_2 & x_3 & \cdots & x_d & x_1
\end{bmatrix}$. This shift can be expressed as a convolution, $xR = e*x$, where $e=\begin{bmatrix}
    0  & \cdots & 0 & 1 & 0
\end{bmatrix}$. Convolution is translation equivariant because
\[
\left(u*\left(vR\right) \right)_{i}= \sum_{j=1}^{d}u_{i-j}\left(vR\right)_{j} = \sum_{j=1}^{d}u_{i-j}v_{j+1} = \left(u*v\right)_{i+1} = \left(\left(u*v\right)R\right)_{i}.
\]

Based on the above observation, a convolutional layer with one input channel and one output channel is translation equivariant because of the choice of biases in Definition \ref{def:convLayer}, $\vec{1}R=\vec{1}$, and the fact that the activation function operates coordinate-wise. 
As a result, multilayer convolutional neural networks with one input channel and one output channel also maintain translation equivariance.

Therefore, we need another variant of the hat function for the i-Conv-ResNet block. 
We represent the hat function using three feedforward neural network layers, as shown in \eqref{eq:vartheta_fnn}. The first hidden layer is $T_{1}:\R^{d}\to \R^{2d}$ as
\[
T_{1}\left(x\right) = xW_1^{T} + b_1,\quad x\in \R^{d}
\]
where $W_1$ and $b_1$ is the same as in \eqref{eq:params_vartheta}. 
The second hidden layer is $T_{2}:\R^{2d}\to \R$ as 
\[
T_{2}\left(y\right) = yW_{2}^{T} + b_2,\quad y\in \R^{2d},
\]
where $W_2$ and $b_2$ is the same as in \eqref{eq:params_vartheta}.
The output layer is an identity transform $\id:\R\to \R$.  
We can extend the hat function in Definition \ref{def:hat_func} into a wider feedforward neural network with translation-equivariant properties by putting multiple feedforward networks in parallel. 

Let $d\in \N$, $M>0$, $0<c<\frac{1}{3dM}$, $v=\pm 1$, $z\in \R^{d}$, $R$ as \eqref{eq:shift}, and let 
$\vartheta_{z,M,c}$ be the hat function. Let $W_1$, $W_2$, $b_1$ and $b_2$ be the same as in \eqref{eq:params_vartheta}. 
Define two feedforward layers $\bar{T_1}:\R^{d}\to \R^{2d^2}$ and $\bar{T_2}:\R^{2d^2}\to \R^{d}$ as 
\[
\bar{T_1}\left(x\right) = x\begin{bmatrix}
    W_1^{T} & RW_1^{T} & \cdots & R^{d-1}W_1^{T}
\end{bmatrix}+\begin{bmatrix}
    b_1 & b_1 & \cdots & b_1
\end{bmatrix},\quad x\in \R^{d},
\]
and 
\[
\bar{T_2}\left(y\right) = y\begin{bmatrix}
    W_2^{T} & 0 & \cdots & 0 \\
    0 & W_2^{T} & \cdots & 0 \\
    \vdots & \vdots & \ddots & \vdots \\
    0 & 0 & \cdots  & W_2^{T}
\end{bmatrix} + b_2\cdot \vec{1},\quad y\in \R^{2d^{2}}.
\]
Define a wider feedforward neural network $\Phi_{z,M,c}^{v}:\R^{d}\to \R^{d}$ as 
\begin{equation}\label{eq:Phi}
    \Phi_{z,M,c}^{v}\left(x\right) = \left(v\cdot\id\right)\circ\bar{T_2}\circ\bar{T_1}\left(x\right) = v\begin{bmatrix}
    \vartheta_{z,M,c}\left(x\right) & \vartheta_{z,M,c}\left(xR\right) & \cdots & \vartheta_{z,M,c}\left(xR^{d-1}\right)
\end{bmatrix}, \quad x\in \R^{d}
\end{equation}
According to the definition, the feedforward neural network $\Phi_{z,M,c}^{v}$ is translation equivariant because $\Phi_{z,M,c}^{v}\left(xR\right) = \Phi_{z,M,c}^{v}\left(x\right)R$ for $x\in \R^{d}$ and $R^{d}=I$. 
There is a feedforward neural network representing the first element of $\Phi_{z,M,c}^{v}$, $\ie$ $v\vartheta_{z,M,c}$. Moreover, the architecture satisfies $\operatorname{Arc_{FNN}}(v\vartheta_{z,M,c}) = \operatorname{Arc_{FNN}}(\vartheta_{z,M,c})=(d,2d,1,1;\sigma)$. 
Thus, there is a convolutional neural network representing $\Phi_{z,M,C}^{v}$ according to \cite[Theorem 4.1]{PV2019} with architecture $\operatorname{Arc_{CNN}}(\Phi_{z,M,C}^{v}) =  (d;1,2d,1,1;\sigma)$.

\begin{definition}[The hat function for the i-Conv-ResNet block]\label{def:hat_func_i-Conv-ResNet}
    For $d\in \N$, $M>0$, $0<c<\frac{1}{3dM}$, $v=\pm 1$, $z\in\R^{d}$ and $R$ as \eqref{eq:shift}, the hat function for the i-Conv-ResNet block is $\Phi_{z,M,c}^{v}:\R^{d}\to \R^{d}$ defined as \eqref{eq:Phi} and represented by a convolutional neural network with architecture $\operatorname{Arc_{CNN}}(\Phi_{z,M,c}^{v})=(d;1,2d,1,1;\sigma)$. 
\end{definition}
The support of the hat function for the i-Conv-ResNet block is more complicated than that of the hat function for the i-ResNet block. However, the Lipschitz constants of both functions are the same.
\begin{lemma}\label{prpt:Phi}
    Let $d\in \N$, $M>0$, $0<c<\frac{1}{3dM}$, $v=\pm 1$, $z\in\R^{d}$, $R$ as \eqref{eq:shift}, $\vartheta_{z,M,c}$ the hat function, and $\Phi_{z,M,c}^{v}$ the hat function for the i-Conv-ResNet block. Then,  
    \begin{enumerate}[label=(\alph*)]
        \item\label{em:supp_Phi_up} $\supp~\Phi_{z,M,c}^{v} = \bigcup_{i=1}^{d} \left\{ xR^{i}\mid x\in\supp~\vartheta_{z,M,c}^{v} \right\}\subset \bigcup_{i=1}^{d}zR^{i} + \left[-\frac{1}{M},\frac{1}{M}\right]^{d}$ with $0\le \|\Phi_{z,M,c}^{v}(x)\|_{\infty}\le c$ for all $x\in \R^{d}$,
        \item\label{em:supp_Phi_low} $\|\Phi_{z,M,c}^{v}(x)\|_{\infty}\ge \frac{c}{2}$ if $x\in \bigcup_{i=1}^{d} zR^{i}+\left[-\frac{1}{2dM},\frac{1}{2dM}\right]^{d}$,
        \item\label{em:Lip_Phi} $\Phi_{z,M,c}^{v}$ is Lipschitz and $\Lip\left(\Phi_{z,M,c}^{v}\right)\le 3cdM<1 $. 
    \end{enumerate}
\end{lemma}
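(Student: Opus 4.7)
The plan is to exploit the explicit formula \eqref{eq:Phi}, which expresses $\Phi_{z,M,c}^v(x)$ as the row vector whose $j$-th entry is $v\,\vartheta_{z,M,c}(xR^{j-1})$. Every claim then reduces to a componentwise application of Lemma \ref{prpt:supp_hat} and Lemma \ref{prpt:Lip_vartheta}, combined with the elementary observation that $R$ is a permutation matrix satisfying $R^d = I$ that leaves any centered, axis-aligned hypercube $[-r,r]^d$ invariant.

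For part \ref{em:supp_Phi_up}, I would first note that $\Phi_{z,M,c}^v(x) = 0$ iff $\vartheta_{z,M,c}(xR^j) = 0$ for all $j \in \{0, \ldots, d-1\}$. Taking closures and reindexing via $R^d = I$ yields the equality $\supp~\Phi_{z,M,c}^v = \bigcup_{i=1}^d \{xR^i : x \in \supp~\vartheta_{z,M,c}\}$. The inclusion into $\bigcup_{i=1}^d zR^i + [-1/M,1/M]^d$ then follows from Lemma \ref{prpt:supp_hat}(a) after observing that right multiplication by $R^i$ sends $z + [-1/M,1/M]^d$ onto $zR^i + [-1/M,1/M]^d$, since $R^i$ only permutes coordinates. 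The pointwise bound $\|\Phi_{z,M,c}^v(x)\|_\infty \le c$ is then immediate from $|v| = 1$ and Lemma \ref{prpt:supp_hat}(a).

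For part \ref{em:supp_Phi_low}, if $x \in zR^i + [-1/(2dM),1/(2dM)]^d$ for some $i \in \{1,\ldots,d\}$, then $xR^{d-i} \in z + [-1/(2dM),1/(2dM)]^d$ by the same cube-invariance argument together with $R^d = I$. Lemma \ref{prpt:supp_hat}(b) gives $\vartheta_{z,M,c}(xR^{d-i}) \ge c/2$, and since $d-i \in \{0,\ldots,d-1\}$ this value is attained by one of the components of $\Phi_{z,M,c}^v(x)$. For part \ref{em:Lip_Phi}, I would bound $\|\Phi_{z,M,c}^v(x) - \Phi_{z,M,c}^v(\tilde x)\|_\infty$ by the max over $j$ of $|\vartheta_{z,M,c}(xR^j) - \vartheta_{z,M,c}(\tilde x R^j)|$; applying Lemma \ref{prpt:Lip_vartheta} together with $\|(x-\tilde x)R^j\|_\infty = \|x-\tilde x\|_\infty$ yields $\Lip(\Phi_{z,M,c}^v) \le 3cdM$, and the strict inequality $3cdM < 1$ is the hypothesis $c < 1/(3dM)$.

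The argument contains no real obstacle; the only point that demands a bit of care is the bookkeeping between the indexing convention in \eqref{eq:Phi} (exponents $0,\ldots,d-1$) and the one in the statement (exponents $1,\ldots,d$), which is reconciled by the identity $R^d = I$. All other steps are direct transfers of the scalar hat function results through the permutation action of $R$.
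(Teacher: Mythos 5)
Your proposal is correct and follows essentially the same route as the paper's own proof: reduce each claim to a componentwise application of Lemma~\ref{prpt:supp_hat} and Lemma~\ref{prpt:Lip_vartheta}, using that $R$ is a permutation matrix with $R^d=I$ which leaves $[-r,r]^d$ invariant. The only difference is cosmetic — you spell out the cube-invariance and reindexing bookkeeping a little more explicitly than the paper does.
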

\begin{proof}
    According to the definition of $\Phi_{z,M,c}^{v}$, it holds that $\supp~\Phi_{z,M,c}^{v} = \bigcup_{i=1}^{d} \supp~ \vartheta_{z,M,c}^{v}\left(\cdot R^{i}\right)$. 
    For $x\in \R^{d}$, 
    $x\in \supp~ \vartheta_{z,M,c}^{v}\left(\cdot R^{i}\right)$ if and only if $xR^{i}\in \supp~\vartheta_{z,M,c}^{v}$, $\ie$ $x\in \left\{yR^{d-i}\mid y\in \supp~\vartheta_{z,M,c}^{v}\right\}$ with $R^0 =R^{d} = I$. Thus, $\supp~ \vartheta_{z,M,c}^{v}\left(\cdot R^{i}\right) = \left\{yR^{d-i}\mid y\in \supp~\vartheta_{z,M,c}^{v}\right\}$. 
    Together with Lemma \ref{prpt:supp_hat}, \ref{em:supp_Phi_up} and \ref{em:supp_Phi_low} are proved. 

    Choose $x\in \R^{d}$ and $\tilde{x} \in \R^d$. There is an index $i$ such that   
    \begin{equation*}
        \begin{split}
            \left\|\Phi_{z,M,c}^{v}\left(x\right)-\Phi_{z,M,c}^{v}\left(\tilde{x}\right)\right\|_{\infty} 
            & \le |v|\left|\vartheta_{z,M,c}\left(xR^{i}\right)-\vartheta_{z,M,c}\left(\tilde{x}R^{i}\right)\right| \\
            & \le \Lip\left(\vartheta_{z,M,c}\right)\left\|\left(x-\tilde{x}\right)R^{i}\right\|_{\infty}\\
            & = \Lip\left(\vartheta_{z,M,c}\right)\left\|\left(x-\tilde{x}\right)\right\|_{\infty}\\
            &\le 3cdM\left\|x-\tilde{x}\right\|_{\infty},
        \end{split}
    \end{equation*}
    where the last line is due to $\Lip(\vartheta_{z,M,c}) \le 3cdM$ that was proved in Lemma \ref{prpt:Lip_vartheta}.
    Thus, \ref{em:Lip_Phi} is proved.
\end{proof}

\section{Computation complexity with respect to sampling numbers}\label{se:curse}

The learning task involves identifying a suitable function from the hypothesis set that accurately fits the given samples and generalizes well to unseen data \cite{Berner_Grohs_Kutyniok_Petersen_2022}. This task can be approached using various methods, commonly referred to as learning algorithms.
Below, we consider two types of algorithms: deterministic and random.
\begin{definition}[Deterministic algorithms]
    Let $d\in \N$. 
    Given a function set $U\subset \left(C\left([0,1]^{d}\right)\right)^{d} \cap Y$ where $Y$ is a Banach space, and $m\in \N$ point samples $\left\{x_{(i)}, u\left(x_{(i)}\right)\right\}_{i=1}^{m}$ for a function $u\in U$. 
    We call $A:U \to Y$ a deterministic algorithm using $m$ point samples if $A(u)=Q\left(x_{(1)}, \cdots, x_{(m)}, u\left(x_{(1)}\right), \cdots, u\left(x_{(m)}\right)\right)$, where $Q: \left([0,1]^{d}\right)^{m}\times \left(\R^d\right)^{m} \to Y$ is a map from sample points to elements of $Y$. The collection of all deterministic algorithms using $m$ point samples is denoted by $\Alg_m\left(U,Y\right)$.
\end{definition}

In practice, stochastic elements are often incorporated into algorithms to prevent the training process from getting stuck in local minima. For example, stochastic gradient descent (SGD) \cite{RM1951} introduces randomness during neural network training. Then, the overall algorithm can be seen as a collection of deterministic processes governed by specific probability distributions.
\begin{definition}[Random algorithms]\label{def:random_alg}
    Let $d, m\in \N$.
    Given a function set $U\subset \left(C\left([0,1]^{d}\right)\right)^{d} \cap Y$ where $Y$ is a Banach space. 
    Given a probability space $\left(S,\mathcal{F},\mathbb{P}\right)$. We call $\left(\mathbf{A},\mathbf{m}\right)$ a random algorithm using $m$ point samples on average if $\mathbf{A}=\left(A_{s}\right)_{s\in S}$ and $\mathbf{m}:S\to \N$ satisfies the following conditions.
    \begin{enumerate}[label=(\alph*)]
        \item\label{eum:random_alg_m} The map $\mathbf{m}$ is measurable and 
        $\int_{S}\mathbf{m}(s)\di \mathbb{P}(s)\le m$.
        \item For every $s\in S$, the deterministic algorithm $A_{s}\in \Alg_{\mathbf{m}(s)}\left(U,Y\right)$.
    \end{enumerate}
    The collection of all random algorithms using $m$ point samples on average is denoted by $\Alg_{m}^{MC}\left(U,Y\right)$.
\end{definition}
A deterministic algorithm can be viewed as a special random algorithm with $S$ only containing one element. 

We can have the following estimate based on Markov's inequality. 
Suppose $S_0=\left\{s\in S\mid \mathbf{m}(s)\le 2m\right\}$, the condition
$\int_{S}\mathbf{m}(s)\di \mathbb{P}(s)\le m$ gives 
\begin{equation}\label{eq:S_0}
    \mathbb{P}\left(S_0\right)\ge \frac{1}{2},
\end{equation}
since $\int_{S}\mathbf{m}(s)\di \mathbb{P}(s) \ge 2m\mathbb{P}\left(S_0^{C}\right)$. 

The approximation error quantifies how closely the approximation matches the true function. The error can be influenced by factors such as the number of point samples, the complexity of the model, and the dimensionality of the input space. A lower bound on the error indicates the best possible approximation error that can be achieved with a given number of samples. Here, we define an optimal error that quantifies the challenge of reconstructing each function in the function set using a finite number of point samples measured with respect to a given norm. 
\begin{definition}[Optimal error]
    Let $d, m\in \N$. 
    Given a function set $U\subset \left(C\left([0,1]^{d}\right)\right)^{d} \cap Y$ where $Y$ is a Banach space. 
    Given a probability space $\left(S,\mathcal{F},\mathbb{P}\right)$. 
    For the given set $U$, the optimal error $\err_{m}^{MC}\left(U,Y\right)$ is defined as 
    \[
    \err_m^{MC}\left(U,Y\right) = \inf_{\left(\mathbf{A},\mathbf{m}\right)\in \Alg_{m}^{MC}\left(U,Y\right)} \sup_{u\in U} 
    \int_{S}\left\|u-A_s(u)\right\|_{Y}\di \mathbb{P}(s). 
    \]
\end{definition}

As the size of the function set increases, the corresponding optimal error also increases. In the following, we will derive lower bounds for two very restricted function sets: one for i-ResNets and another for i-Conv-ResNets.

To begin with, let us define the norm that will be considered in the following discussion. 
\begin{definition}[$L^{p}\left(\Omega,\R^{d}\right)$-norm]\label{def:Lp_norm}
    Let $d\in \N$, $1\le p\le \infty$, and $\Omega\subset \R^{d}$. 
    For a function $f:\Omega\to \R^{d}$, we define the $L^{p}\left(\Omega,\R^{d}\right)$-norm of $f$ as 
    \[
    \left\|f\right\|_{L^{p}\left(\Omega,\R^{d}\right)} = \left(\int_{\Omega}\left\|f(x)\right\|_{\infty}^{p}\di x \right)^{\frac{1}{p}},
    \]
    when $1\le p<\infty$, and 
    \[
    \left\|f\right\|_{L^{\infty}\left(\Omega,\R^{d}\right)}=\sup_{x\in \Omega} \left\|f(x)\right\|_{\infty},
    \]
    when $p=\infty$. We denote the collection of all functions whose $L^{p}\left(\Omega,\R^{d}\right)$-norm are finite by $L^{p}\left(\Omega,\R^{d}\right)$.  
\end{definition}

The key to establishing the lower bound of the optimal error is that most algorithms cannot distinguish between numerous functions within the function set and a specific target function. First, we will examine the distortion of the domain under an invertible map. For a function $f:\Omega \to \Omega'$ and a set $V\subset \Omega'$, define $f^{-1}\left(V\right) = \left\{x\in \Omega\mid f(x)\in V\right\}$.
\begin{lemma}\label{lem:inv_domain}
    Let $d\in \N$, and $C_1, C_2, D>0$.
    Suppose $f:\R^d \to \R^{d}$ is invertible with $\Lip\left(f\right) \le C_1$ and $\Lip\left(f^{-1}\right)\le C_2$. Let $V = f(x)+[-D,D]^d$. 
    Then
    \[
    x+\left[-C_1^{-1}D,C_1^{-1}D\right]^d \subset f^{-1}\left(V\right)\subset x + \left[-C_2D, C_2D\right]^d.
    \]
\end{lemma}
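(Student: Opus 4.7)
The plan is to observe that the hypercubes on both sides of the inclusion are exactly closed balls in the $\|\cdot\|_{\infty}$ norm: for any $y \in \R^{d}$ and $r>0$, $y \in x + [-r,r]^{d}$ if and only if $\|y-x\|_{\infty}\le r$. Since the paper's convention is that $\Lip(\cdot)$ is taken with respect to $\|\cdot\|_{\infty}$, both inclusions should then follow directly from applying the Lipschitz bound to $f$ on one side and to $f^{-1}$ on the other. So the proof reduces to two short norm estimates, and I expect no real obstacle beyond keeping the roles of $C_1$ and $C_2$ straight.

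For the first inclusion $x + [-C_1^{-1}D, C_1^{-1}D]^{d} \subset f^{-1}(V)$, I would take an arbitrary $y$ in the left-hand cube, note $\|y-x\|_{\infty} \le C_1^{-1}D$, and invoke $\Lip(f)\le C_1$ to obtain
\[
\|f(y)-f(x)\|_{\infty} \le C_1 \|y-x\|_{\infty} \le D,
\]
so that $f(y) \in f(x) + [-D,D]^{d} = V$, which by definition of preimage places $y$ in $f^{-1}(V)$.

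For the second inclusion $f^{-1}(V) \subset x + [-C_2D, C_2 D]^{d}$, I would take $y \in f^{-1}(V)$, so that $f(y) \in V$ and hence $\|f(y)-f(x)\|_{\infty}\le D$. Since $f$ is invertible I can write $y-x = f^{-1}(f(y)) - f^{-1}(f(x))$ and apply $\Lip(f^{-1}) \le C_2$ to conclude
\[
\|y-x\|_{\infty} \le C_2 \|f(y)-f(x)\|_{\infty} \le C_2 D,
\]
which places $y$ in $x + [-C_2D, C_2D]^{d}$. Combining the two inclusions completes the proof. The only subtlety worth flagging is the implicit use of the paper's convention that Lipschitz constants are measured in $\|\cdot\|_{\infty}$; with any other norm one would have to track equivalence constants depending on $d$, but in the stated setup the two estimates line up without further adjustment.
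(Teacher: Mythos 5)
Your proof is correct and follows essentially the same argument as the paper: apply the Lipschitz bound for $f$ to establish the first inclusion and the Lipschitz bound for $f^{-1}$ to establish the second. The only cosmetic difference is that the paper parametrizes the second inclusion by points $y\in V$ and considers $f^{-1}(y)$, while you parametrize by points $y\in f^{-1}(V)$ and consider $f(y)$; these are the same estimate.
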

\begin{proof}
    Suppose $\tilde{x}\in  x+\left[-C_1^{-1}D,C_1^{-1}D\right]^d $, $\ie$ $\|x-\tilde{x}\|_{\infty}\le C_1^{-1}D$. 
    Then, $\|f(x)-f(\tilde{x})\|_{\infty}\le C_1\|x-\tilde{x}\|_{\infty} \le D$, which proves $x+\left[-C_1^{-1}D,C_1^{-1}D\right]^d \subset f^{-1}\left(V\right)$. 
    For every $y\in V$, $f^{-1}(y)$ is well defined since $f$ is invertible. Then $\|x - f^{-1}(y)\|_{\infty} \le \Lip(f^{-1})\|f(x) - y\|_{\infty} \le C_2D$, which proves $f^{-1}\left(V\right)\subset x + \left[-C_2D, C_2D\right]^d$. 
\end{proof}

We can similarly prove and obtain the following corollary.
\begin{corollary}\label{cor:value_domain}
    Let $d\in \N$, $C_1, C_2, D>0$.
    Suppose $f:\R^d \to \R^{d}$ is invertible with $\Lip\left(f\right) \le C_1$ and $\Lip\left(f^{-1}\right)\le C_2$. Let $V=x+[-D,D]^{d}$. Then
    \[
    f(x) + \left[-C_2^{-1}D,C_2^{-1}D\right]^{d}\subset  f(V)\subset f(x) + \left[-C_1D,C_1D\right]^d.
    \]
\end{corollary}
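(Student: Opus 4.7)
The corollary is the direct analogue of Lemma \ref{lem:inv_domain} with the roles of $f$ and $f^{-1}$ interchanged, and it can be handled by the same two-line arguments applied in the opposite direction. The plan is to verify the two inclusions separately, each time directly from the Lipschitz bound that is responsible for it: $\Lip(f)\le C_1$ controls how much $f$ can enlarge $V$, while $\Lip(f^{-1})\le C_2$ controls how far a perturbation in the codomain can move the preimage inside $V$.

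For the right-hand inclusion $f(V)\subset f(x)+[-C_1 D,C_1 D]^d$, I would pick any $\tilde{x}\in V$, so that $\|\tilde{x}-x\|_{\infty}\le D$, and then apply $\Lip(f)\le C_1$ to obtain
\[
\|f(\tilde{x})-f(x)\|_{\infty}\le C_1\|\tilde{x}-x\|_{\infty}\le C_1 D,
\]
which places $f(\tilde{x})$ inside $f(x)+[-C_1 D,C_1 D]^d$.

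For the left-hand inclusion $f(x)+[-C_2^{-1}D,C_2^{-1}D]^d\subset f(V)$, I would take an arbitrary $y$ in the small box around $f(x)$ and exhibit an explicit preimage in $V$. Invertibility of $f$ lets me set $\tilde{x}:=f^{-1}(y)$; combining $\Lip(f^{-1})\le C_2$ with $f^{-1}(f(x))=x$ then gives
\[
\|\tilde{x}-x\|_{\infty}=\|f^{-1}(y)-f^{-1}(f(x))\|_{\infty}\le C_2\|y-f(x)\|_{\infty}\le C_2\cdot C_2^{-1}D=D,
\]
so $\tilde{x}\in V$ and $y=f(\tilde{x})\in f(V)$.

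There is no substantive obstacle: both inclusions drop out of the two Lipschitz hypotheses and the invertibility of $f$, and the proof is essentially a mechanical mirror of Lemma \ref{lem:inv_domain}. The only point requiring mild care is bookkeeping of the constants, namely remembering that it is $\Lip(f)$ that governs the outer box (because $f$ may stretch $V$) and $\Lip(f^{-1})$ that governs the inner box (because one needs to guarantee that $f^{-1}(y)$ does not escape $V$). Once these roles are fixed, the factors $C_1$ and $C_2^{-1}$ land on the correct sides automatically.
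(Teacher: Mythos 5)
Your proof is correct and is precisely the mirrored version of the argument for Lemma \ref{lem:inv_domain} that the paper invokes with the phrase ``We can similarly prove''; you have simply written out explicitly what the paper leaves implicit. The constants $C_1$ and $C_2^{-1}$ land on the correct sides, and both inclusions are handled exactly as intended.
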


\subsection{The case of i-ResNet}
We use the convention that $\frac{s}{\infty}=0$ for all $s > 0$ in this subsection. 
Let $d\in \N$, $M>0$, $0<c<\frac{1}{3dM}$, $v=\pm 1$ and $z\in \R^{d}$. 
Let us denote the hat function by $\vartheta_{z,M,c}:\R^{d}\to \R$ and the hat function for the i-ReNet block by $\Theta_{z, M,c}^{v}:\R^{d}\to \R^{d}$ as in Definition \ref{def:hat_func_i-ResNet}. We put the hat function $\Theta_{z,M,c}^{v}$ in a residual neural network block, forming a local disturbance on the first variable as 
\begin{equation}\label{eq:hat_func_i-ResNet_block}
    \left(\Theta_{z,M,c}^{v} + \id\right)(x) = \begin{bmatrix}
    x_1 + v\vartheta_{z,M,c}(x) & x_2 & \cdots & x_d
\end{bmatrix},\quad \text{for all }x\in \R^{d}.
\end{equation}

The following lemma estimates the distance between two functions that differ from the target function by the addition or subtraction of the hat function. This result is crucial for the proof of our main theorem.
\begin{lemma}\label{lem:pm_hat}
    Let $d\in \N$, $M>0$, $0<c<\frac{1}{3dM}$, $v=\pm1$, and $C_1,C_2>0$. 
    Suppose $f:\R^{d}\to \R^{d}$ is an invertible function with $\Lip\left(f\right) \le C_1$ and $\Lip\left(f^{-1}\right)\le C_2$.
    Let $z\in f\left(\R^{d}\right) \subset\R^{d}$ and $\Theta_{z,M,c}^{v}$ be the hat function for the i-ResNet block. 
    Let $f_{z,v} = \left(\Theta_{z,M,c}^{v}+\id\right)\left(f\right)$.
    Then for $1\le p\le \infty$,
    \[
    cd^{-\frac{d}{p}}C_1^{-\frac{d}{p}}M^{-\frac{d}{p}} \le \left\|f_{z,+1} -f_{z,-1} \right\|_{L^{p}\left(\R^{d},\R^{d}\right)} \le c 2^{\frac{d}{p}+1}C_2^{\frac{d}{p}}M^{-\frac{d}{p}}. 
    \]
    Moreover, when $p=\infty$,
     \[
    \left\|f_{z,+1}-f_{z,-1}\right\|_{L^{\infty}\left(\R^{d},\R^{d}\right)} = 2c.
    \]
\end{lemma}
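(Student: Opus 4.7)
The plan is to reduce both bounds to a pointwise identity and then estimate a single integral by carefully controlling the size of the set where the hat function is active. Starting from the definition, $f_{z,v}(x) = f(x) + \Theta_{z,M,c}^{v}(f(x))$ and the first coordinate of $\Theta_{z,M,c}^{v}(y)$ equals $v\,\vartheta_{z,M,c}(y)$ while all other coordinates vanish. Therefore
\[
f_{z,+1}(x) - f_{z,-1}(x) = \begin{bmatrix} 2\,\vartheta_{z,M,c}(f(x)) & 0 & \cdots & 0 \end{bmatrix},
\]
and since $\vartheta_{z,M,c}\ge 0$ this gives $\|f_{z,+1}(x)-f_{z,-1}(x)\|_\infty = 2\,\vartheta_{z,M,c}(f(x))$. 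This reduces everything to estimating $\|\vartheta_{z,M,c}\circ f\|_{L^p(\R^d)}$.

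For $p=\infty$, I would note that $\vartheta_{z,M,c}(z)=c$ follows immediately from the definitions (plugging $z$ into \eqref{eq:Delta_zMc} gives $dc-(d-1)c=c$), and by Lemma \ref{prpt:supp_hat}(a) $c$ is also an upper bound. Since $z\in f(\R^d)$ by assumption, writing $x_0 = f^{-1}(z)$ yields $\vartheta_{z,M,c}(f(x_0))=c$, so the supremum is attained and equals $c$, giving the $L^\infty$ identity $\|f_{z,+1}-f_{z,-1}\|_{L^\infty(\R^d,\R^d)} = 2c$.

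For $1\le p<\infty$, the $p$-th power of the norm is $2^p\int_{\R^d}\vartheta_{z,M,c}(f(x))^p\,\di x$, and the integrand vanishes off the preimage $E := f^{-1}(z+[-1/M,1/M]^d)$ by Lemma \ref{prpt:supp_hat}(a). Here is where Lemma \ref{lem:inv_domain} enters, applied at the point $x_0=f^{-1}(z)$ with $D=1/M$: it sandwiches $E$ between two axis-aligned cubes, namely
\[
x_0 + \bigl[-(C_1M)^{-1},(C_1M)^{-1}\bigr]^d \;\subset\; E \;\subset\; x_0 + \bigl[-C_2/M,C_2/M\bigr]^d.
\]
The upper bound then follows from $\vartheta_{z,M,c}\le c$ and the outer inclusion, which gives $|E|\le (2C_2/M)^d$ and hence $\|\cdot\|_{L^p}\le 2c\,(2^dC_2^dM^{-d})^{1/p} = c\,2^{d/p+1}C_2^{d/p}M^{-d/p}$.

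For the lower bound, I would instead invoke Lemma \ref{prpt:supp_hat}(b), which says $\vartheta_{z,M,c}\ge c/2$ on $z+[-1/(2dM),1/(2dM)]^d$. Applying Lemma \ref{lem:inv_domain} once more, this time with $D = 1/(2dM)$, the inner inclusion produces a cube of side length $2(2dMC_1)^{-1}$ contained in the superlevel set $\{x : \vartheta_{z,M,c}(f(x))\ge c/2\}$, yielding
\[
\int_{\R^d} \vartheta_{z,M,c}(f(x))^p \,\di x \;\ge\; (c/2)^p\,(dMC_1)^{-d},
\]
and hence $\|f_{z,+1}-f_{z,-1}\|_{L^p} \ge c\,d^{-d/p}C_1^{-d/p}M^{-d/p}$. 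No real obstacle is expected: the only subtle point is ensuring that Lemma \ref{lem:inv_domain} is available at the base point $x_0 = f^{-1}(z)$, which requires precisely the hypothesis $z\in f(\R^d)$. Once that is in place, the calculation is purely bookkeeping of constants and exponents.
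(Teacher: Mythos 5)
Your proof is correct and follows essentially the same path as the paper's: reduce to the scalar quantity $2\,\vartheta_{z,M,c}(f(x))$, use Lemma \ref{prpt:supp_hat} for the pointwise upper bound and superlevel-set bound, and use the two inclusions of Lemma \ref{lem:inv_domain} to sandwich the relevant preimages by axis-aligned cubes. The only cosmetic difference is that you invoke Lemma \ref{lem:inv_domain} explicitly by name, whereas the paper introduces the cubes $U_1,V_1,U_2,V_2$ directly and states the inclusions $f^{-1}(U_1)\subset V_1$ and $V_2\subset f^{-1}(U_2)$ inline; the underlying estimates and constants are identical.
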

\begin{proof}
    Let us recall the hat function $\vartheta_{z,M,c}$. 
    According to the definition of the $L^{p}\left(\R^{d},\R^{d}\right)$-norm, when $1\le p<\infty$, 
    \begin{equation*}
        \begin{split}
            \left\|f_{z,+1} -f_{z,-1} \right\|_{L^{p}\left(\R^{d},\R^{d}\right)} & = \left(\int\left\|f_{z,+1}(x)-f_{z,-1}(x)\right\|_{\infty}^{p}\di x\right)^{\frac{1}{p}} \\
            & = 2\left(\int_{f^{-1}\left(\supp~ \Theta_{z,M,c}^{+1}\right)} \left\| \Theta_{z,M,c}^{+1}\left(f(x)\right) \right\|_{\infty}^{p}\di x \right)^{\frac{1}{p}}\\
            & =2 \left(\int_{f^{-1}\left(\supp~ \vartheta_{z,M,c}\right)} \left|\vartheta_{z,M,c}\left(f\left(x\right)\right)\right|^{p} \di x\right)^{\frac{1}{p}}.
        \end{split}
    \end{equation*}
    One the one hand, let $U_1 = z + \left[-\frac{1}{M},\frac{1}{M}\right]^{d}$ and $V_1 = f^{-1}(z) + \left[-\frac{C_2}{M}, \frac{C_2}{M}\right]^d$.
    Then, 
    \begin{equation*}
        \begin{split}
        \left\|f_{z,+1} -f_{z,-1} \right\|_{L^{p}\left(\R^{d},\R^{d}\right)}  &\le 2 \left(\int_{f^{-1}\left(U_1\right)}\left|\vartheta_{z,M,c}\left(f\left(x\right)\right)\right|^{p} \di x\right)^{\frac{1}{p}}\\
        &\le 2\left( \int_{V_1}\left|\vartheta_{z,M,c}\left(f\left(x\right)\right)\right|^{p}  \di x \right)^{\frac{1}{p}} \\
        &\le c2^{\frac{d}{p}+1}C_2^{\frac{d}{p}}M^{-\frac{d}{p}},
        \end{split}
    \end{equation*}
    where the first line is due to $\supp~\vartheta_{z,M,c}\subset U_1$, the second line is due to $f^{-1}\left(U_1\right)\subset V_1$, and the last line is due to $\left|\vartheta_{z,M,c}(x)\right|\le c$ for every $x\in \R^{d}$. 
    
    On the other hand, let $U_2 = z + \left[-\frac{1}{2dM}, \frac{1}{2dM}\right]^d$ and $V_2=f^{-1}(z) + \left[-\frac{1}{2dC_1M},\frac{1}{2dC_1M}\right]^d$. Then, 
    \begin{equation}\label{eq:low_b_hat_iResNet}
        \begin{split}
        \left\|f_{z,+1} -f_{z,-1} \right\|_{L^{p}\left(\R^{d},\R^{d}\right)}  &\ge 2 \left(\int_{f^{-1}\left(U_2\right)}\left|\vartheta_{z,M,c}\left(f\left(x\right)\right)\right|^{p} \di x\right)^{\frac{1}{p}}\\
        &\ge 2\left( \int_{V_2}\left|\vartheta_{z,M,c}\left(f\left(x\right)\right)\right|^{p}  \di x \right)^{\frac{1}{p}} \\
        &\ge cd^{-\frac{d}{p}}C_1^{-\frac{d}{p}}M^{-\frac{d}{p}},
        \end{split}
    \end{equation}
    where the first line is due to $U_2\subset \supp~\vartheta_{z,M,c}$, the second line is due to $V_2\subset f^{-1}\left(U_2\right)$, and the last line is due to $\vartheta_{z,M,c}\left(x\right)\ge \frac{c}{2}$ when $x\in z+\left[-\frac{1}{2dM},\frac{1}{2dM}\right]^{d}$.

    When $p=\infty$, 
    \[
    \left\|f_{z,+1}-f_{z,-1}\right\|_{L^{\infty}\left(\R^{d},\R^{d}\right)} = \sup_{x\in f^{-1}\left(\supp~ \vartheta_{z,M,c}\right)} 2\left|\vartheta_{z,M,c}\left(f(x)\right)\right|.
    \]
    Then, $\left\|f_{z,+1}-f_{z,-1}\right\|_{L^{\infty}\left(\R^{d},\R^{d}\right)}=2c$ since $\vartheta_{z,M,c}\left(z\right) = c$ and $z\in f\left(\R^{d}\right)$.
\end{proof}

The following theorem illustrates the difficulty in recovering an i-ResNet based on point samples. 


\begin{theorem}\label{thm:i-ResNets}
    Let $d, m\in\N$, $1\le p\le \infty$, and $f:[0,1]^{d}\to f\left([0,1]^{d}\right)\subset \R^{d}$ be a bi-Lipschitz function. 
    Suppose 
    \[
    N = \left\{\mathcal{N}\mid \operatorname{Arc_{FNN}}(\mathcal{\mathcal{N}})=(d,2d,1,d; \sigma), \Lip\left(\mathcal{N}\right)<1, \left\|W\right\|_{\sup}:=\max_{i,j}|W_{ij}|\le 1,  \text{ for all weights $W$ of $\mathcal{N}$}, \right\}
    \]
    and
    \[
    U = \left\{\left(\mathcal{N} + \id\right)\circ f\mid \mathcal{N}\in N \right\}.
    \]
    Then,
    \[
    \err_{m}^{MC}\left(U,L^{p}\left([0,1]^{d},\R^{d}\right)\right)\ge Cm^{-\frac{1}{d}-\frac{1}{p}},
    \]
    where $C>0$ is independent of $m$.
\end{theorem}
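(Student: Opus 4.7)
The plan is to use a many-indistinguishable-perturbations argument tailored to the i-ResNet constraints. For each $z\in f([0,1]^d)$ and $v\in\{\pm 1\}$ the function $f_{z,v}:=(\Theta_{z,M,c}^{v}+\id)\circ f$ lies in $U$ provided $\Theta_{z,M,c}^{v}\in N$. First I would choose $c=\tfrac{1}{4dM}$: the weight matrices $W_1,W_2,W_3$ of the representation \eqref{eq:vartheta_fnn} then have max entries $2cM=\tfrac{1}{2d}$, $1$ and $1$, all $\le 1$, and $\Lip(\Theta_{z,M,c}^{v})\le 3dcM=\tfrac{3}{4}<1$ by Lemma \ref{prpt:Lip_vartheta}; thus $\Theta_{z,M,c}^{v}\in N$ and Lemma \ref{lem:pm_hat} applies to each pair $f_{z,\pm 1}$.

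Next, let $C_1:=\Lip(f)$ and $C_2:=\Lip(f^{-1})$. By Lemma \ref{lem:inv_domain} the pulled-back support satisfies $f^{-1}(z+[-1/M,1/M]^d)\subset f^{-1}(z)+[-C_2/M,C_2/M]^d$. Picking a grid $\{x_i\}_{i=1}^{N}$ in the interior of $[0,1]^d$ with pairwise $\ell^{\infty}$-separation at least $2C_2/M$ yields $N\ge c_0(d,C_2)M^d$ candidate sites; setting $z_i:=f(x_i)$, the pulled-back supports $f^{-1}(\supp\vartheta_{z_i,M,c})$ are pairwise disjoint subsets of $[0,1]^d$. Fix $M$ so that $N\ge 4m$, i.e.\ $M\asymp m^{1/d}$, and form the discrete adversarial family $U_0:=\{f_{z_i,v_i}:i\in[N],\,v_i\in\{\pm 1\}\}\subset U$ with $|U_0|=2N$.

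Given a random algorithm $(\mathbf{A},\mathbf{m})\in\Alg_m^{MC}(U,L^p([0,1]^d,\R^d))$, restrict to the good event $S_0:=\{s:\mathbf{m}(s)\le 2m\}$, which has $\mathbb{P}(S_0)\ge \tfrac12$ by \eqref{eq:S_0}. For each $s\in S_0$, $A_s$ queries at most $2m$ points, so by disjointness of the pulled-back supports at most $2m$ of the indices $i$ carry a sample inside $f^{-1}(\supp\vartheta_{z_i,M,c})$; for the remaining $\ge N-2m\ge N/2$ indices the sampled values of $f_{z_i,+1}$ and $f_{z_i,-1}$ both agree with those of $f$, forcing $A_s(f_{z_i,+1})=A_s(f_{z_i,-1})$. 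The triangle inequality combined with the lower estimate of Lemma \ref{lem:pm_hat} (which reads on $[0,1]^d$ because the interior choice of $x_i$ places the core cube $V_2$ inside $[0,1]^d$) gives for each such $i$
\[
\|A_s(f_{z_i,+1})-f_{z_i,+1}\|_{L^p}+\|A_s(f_{z_i,-1})-f_{z_i,-1}\|_{L^p}\ \ge\ c\,d^{-d/p}C_1^{-d/p}M^{-d/p}.
\]
Using $\sup_{u\in U}\int_S\ge|U_0|^{-1}\sum_{u\in U_0}\int_S$, summing over the $\ge N/2$ indistinguishable indices, and integrating over $S_0$, one obtains
\[
\err_m^{MC}(U,L^p([0,1]^d,\R^d))\ \ge\ \tfrac12\cdot\tfrac{N-2m}{2N}\cdot c\,d^{-d/p}C_1^{-d/p}M^{-d/p}\ \gtrsim\ c\,M^{-d/p}\ \asymp\ m^{-1/d-1/p},
\]
on substituting $c\asymp M^{-1}$ and $M\asymp m^{1/d}$.

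The main obstacle is the parameter balancing: $M$ must be large enough ($\asymp m^{1/d}$) to furnish $\gtrsim m$ pairwise disjoint perturbation sites, yet $c$ must simultaneously scale like $1/(dM)$ both to satisfy the weight bound $\|W\|_{\sup}\le 1$ and the Lipschitz bound $<1$ defining $N$, while still being the dominant size factor entering Lemma \ref{lem:pm_hat}. The final exponent $-1/d-1/p$ is precisely the product of these two competing scales, namely one factor of $M^{-1}$ from $c$ and $d/p$ factors of $M^{-1/p}$ coming from the volume of the pulled-back peak. Secondary bookkeeping, in particular the mild interior restriction on the grid and the localisation of Lemma \ref{lem:pm_hat} from $\R^d$ to $[0,1]^d$ via the bi-Lipschitz map $f$, is routine once the packing is set up.
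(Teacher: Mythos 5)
Your proposal is correct and follows essentially the same strategy as the paper's proof: build $\gtrsim m$ disjoint $\pm$-pairs of hat-function perturbations whose weight and Lipschitz constraints force $c\asymp 1/(dM)$, use Markov's inequality on $\mathbf{m}$ to restrict to a $\ge\tfrac12$-probability event, note indistinguishability of unqueried pairs, and invoke the lower bound in Lemma \ref{lem:pm_hat} with $M\asymp m^{1/d}$. The only cosmetic difference is that you place the $2C_2/M$-separated grid in the domain $[0,1]^d$ and push forward via $f$, whereas the paper grids the image $f([0,1]^d)$ with spacing $2/M$ directly; the two are equivalent via the bi-Lipschitz property.
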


\begin{proof}
    Let $C_1, C_2>0$ such that $\Lip\left(f\right) \le C_1$ and $\Lip\left(f^{-1}\right)\le C_2$.  
    Let $x^*=\frac{1}{2}\cdot \vec{1}\in \R^d$. Then $[0,1]^{d} = x^* + \left[-\frac{1}{2},\frac{1}{2}\right]^{d}$, and $f(x^*) + \left[-\frac{1}{2C_2},\frac{1}{2C_2}\right]^d \subset f\left([0,1]^{d}\right) \subset f(x^*) + \left[-\frac{C_1}{2},\frac{C_1}{2}\right]^d$ according to Corollary \ref{cor:value_domain}.

    For $m\in\N$, we can choose the constant 
    \[
    M=2C_2\left\lceil \left(3m\right)^{\frac{1}{d}}\right\rceil,
    \]
    which implies
    \[
    2C_2\cdot \left(3m\right)^{\frac{1}{d}}\le M\le 4C_2\left(3m\right)^{\frac{1}{d}},
    \]
    since $\left(3m\right)^{\frac{1}{d}}\le \left\lceil \left(3m\right)^{\frac{1}{d}}\right\rceil \le 2\left(3m\right)^{\frac{1}{d}}$, 
    and
    \[
    \Gamma = f(x^{*})+\left(\frac{1}{M}-\frac{1}{2C_2}\right)\cdot \vec{1} + \mathcal{G}\subset f(x^{*}) + \left[-\frac{1}{2C_2}, \frac{1}{2C_2}\right]^{d} \subset f\left(\left[0,1\right]^{d}\right),
    \]
    where $\mathcal{G}$ is the set of grid points as 
    \[
    \mathcal{G} = \frac{2}{M}\left\{0,1,\cdots,\left\lceil \left(3m\right)^{\frac{1}{d}}\right\rceil -1\right\}^{d}.
    \]
    Besides, 
    \[
    \Gamma + \left[-\frac{1}{M},\frac{1}{M}\right]^{d}\subset f(x^{*})+\left[-\frac{1}{2C_2},\frac{1}{2C_2}\right]^d\subset f\left([0,1]^{d}\right).
    \]
    The number of elements in $\Gamma$ is 
    \begin{equation}\label{eq:Gamma_num}
        3m \le \#\Gamma = \left(\left\lceil \left(3m\right)^{\frac{1}{d}} \right\rceil\right)^{d}\le 2^{d}\cdot 3m. 
    \end{equation}
     
    For every sampling set $\{\left(x_{(i)}, f(x_{(i)})\right)\}_{i=1}^{n}$ with $x_{(i)}\in [0,1]^{d}$ and $n\le 2m$, let 
    \[
    \tilde{\Gamma} = \left\{ z\in \Gamma\mid z+\left[-\frac{1}{M},\frac{1}{M}\right]^{d} \cap \left\{ f(x_{(i)}\right)\}_{i=1}^{n} = \emptyset \right\}.
    \]
    The number of elements in $\tilde{\Gamma}$ is 
    \begin{equation}\label{eq:tilde_Gamma_num}
        \# \tilde{\Gamma} \ge \# \Gamma - n\ge m.
    \end{equation}

    Let $v=\pm 1$, $c=\frac{1}{6dM}$ and $\Theta_{z,M,c}^{v}$ be the hat function for the i-ResNet block. 
    We consider a set of functions $\left\{ 
    f_{z,v}\mid z\in \Gamma, v=\pm 1 \right\}\subset U$, with $f_{z,v} = \left(\Theta_{z,M,c}^{v}+\id\right)\left(f\right)$. 
    If $z\in \tilde{\Gamma}$, then $f_{z,v}\left(x_{(i)}\right) = f\left(x_{(i)}\right)$ for all $i=1,\cdots,n$ and $v=\pm 1$ because $\supp~\Theta_{z,M,c}^{v}\subset z+\left[-\frac{1}{M},\frac{1}{M}\right]^{d}$, which means $A\left(f_{z,v}\right) = A\left(f\right)$ for every deterministic algorithm $A\in \Alg_{n}\left(U,L^{p}\left(\R^{d},\R^{d}\right)\right)$. 
    
    The mean error of recovering $\left\{f_{z,v}\mid z\in \Gamma, v=\pm 1\right\}$ by $A$ is  
    \begin{equation*}
        \begin{split}
            \frac{1}{2\# \Gamma}\sum_{z\in \Gamma, v\in\pm 1} \left\| f_{z,v} - A\left(f_{z,v}\right) \right\|_{L^{p}\left([0,1]^{d},\R^{d}\right)}
            & = \frac{1}{2\# \Gamma} \sum_{z\in \Gamma} \sum_{v=\pm 1} \|f_{z,v}-A\left(f_{z,v}\right)\|_{L^{p}\left([0,1]^{d},\R^{d}\right)}\\
            & \ge \frac{1}{2\# \Gamma} \sum_{z\in \tilde{\Gamma}} \sum_{v=\pm 1} \left\| f_{z,v}-A\left(f_{z,v}\right) \right\|_{L^{p}\left([0,1]^{d},\R^{d}\right)}\\
            & = \frac{1}{2\# \Gamma}\sum_{z\in \tilde{\Gamma}}  \sum_{v=\pm 1} \left\| f_{z,v}-A\left( f \right) \right\|_{L^{p}\left([0,1]^{d},\R^{d}\right)} \\
            &\ge \frac{1}{2\#\Gamma}\sum_{z\in\tilde{\Gamma}}\left\| f_{z,+1}-f_{z,-1} \right\|_{L^{p}\left([0,1]^{d},\R^{d}\right)}\\
            & \ge \frac{\#\tilde{\Gamma}}{2\#\Gamma} cd^{-\frac{d}{p}}C_1^{-\frac{d}{p}}M^{-\frac{d}{p}} \\
            & \ge \frac{1}{3\cdot2^{d+1}} cd^{-\frac{d}{p}}C_1^{-\frac{d}{p}}M^{-\frac{d}{p}},
        \end{split}
    \end{equation*}
    where the fourth line is due to the triangle inequality, the fifth line is due to Lemma \ref{lem:pm_hat}, and the last line is due to the number of elements in set $\Gamma$ and $\tilde{\Gamma}$, $\ie$ \eqref{eq:Gamma_num} and \eqref{eq:tilde_Gamma_num}. 
    
    Next, we establish the lower bound of the error for a random algorithm $\left(\bf{A}, \bf{m}\right)\in \Alg_{m}^{MC}\left(U, L^{p}\left([0,1]^{d},\R^{d}\right)\right)$. Suppose $\left(\mathbf{A},\mathbf{m}\right)$ is based on a probability space $\left(S,\mathcal{F},\mathbb{P}\right)$. Let $S_0=\left\{s\in S\mid \mathbf{m}\left(s\right)\le 2m\right\}$. Then  
    \begin{equation*}
        \begin{split}
            \sup_{u\in U}
            \int_{S}\left\|u-A_s(u)\right\|_{L^{p}\left([0,1]^{d},\R^{d}\right)} \di \mathbb{P}(s)
            & \ge \frac{1}{2\# \Gamma}\sum_{z\in \Gamma, v\in \pm 1} 
            \int_{S} \left\|f_{z,v}-A_s\left(f_{z,v}\right)\right\|_{L^{p}\left([0,1]^{d},\R^{d}\right)}\di\mathbb{P}(s)\\
            & \ge \frac{1}{2\# \Gamma} \sum_{z\in \Gamma,v\in\pm 1}\int_{S_0} \left\| f_{z,v}-A_s\left(f_{z,v}\right) \right\|_{L^{p}\left([0,1]^{d},\R^{d}\right)}\di \mathbb{P}\left(s\right) \\
            & \ge \frac{1}{3\cdot2^{d+1}} cd^{-\frac{d}{p}}C_1^{-\frac{d}{p}}M^{-\frac{d}{p}}\mathbb{P}\left(S_0\right)\\
            &\ge \frac{1}{3\cdot2^{d+2}} cd^{-\frac{d}{p}}C_1^{-\frac{d}{p}}M^{-\frac{d}{p}}\\
            &= \frac{1}{3^2\cdot2^{d+3}} d^{-\frac{d}{p}-1}C_1^{-\frac{d}{p}}M^{-\frac{d}{p}-1}\\
            &\ge \frac{1}{3^2\cdot2^{d+3}} d^{-\frac{d}{p}-1}C_1^{-\frac{d}{p}}\left(4C_2\right)^{-\frac{d}{p}-1}\left(3m\right)^{-\frac{1}{p}-\frac{1}{d}},
        \end{split}
    \end{equation*}
    where the fourth line is due to \eqref{eq:S_0}, the fifth line is due to the selection of $c=\frac{1}{6dM}$ and the last line is due to the selection of $M\le 4C_2\left(3m\right)^{\frac{1}{d}}$. 
    All random algorithms in $\Alg_{m}^{MC}\left(U, L^{p}\left([0,1]^{d},\R^{d}\right)\right)$ satisfy the above lower bound. Therefore, the optimal error is lower bounded by 
    \[
    \err_{m}^{MC}\left(U,L^{p}\left([0,1]^{d},\R^{d}\right)\right)\ge  \sup_{u\in U}\int_{S}\left\|u-A_s(u)\right\|_{L^{p}\left([0,1]^{d},\R^{d}\right)} \di \mathbb{P}(s)\ge Cm^{-\frac{1}{p}-\frac{1}{d}},
    \]
    where $C>0$ is independent of $m$.
\end{proof}

\subsection{The case of i-Conv-ResNet}
We use the convention that $\frac{s}{\infty}=0$ for all $s>0$ in this subsection. 
We will consider additional constraints where the ResNets are not only invertible but also constructed using convolutional ResNet blocks. 

Let us start with the analysis of the $L^{p}\left(\R^{d},\R^{d}\right)$ distance between two perturbed functions. 
\begin{lemma}\label{lem:pm_Lp_conv}
    Let $d\in \N$, $C_1,C_2, M>0$, $0<c<\frac{1}{3dM}$ and $v=\pm 1$.
    Suppose $g:\R^{d}\to \R^{d}$ is an invertible function with $\Lip\left(g\right) \le C_1$ and $\Lip\left(g^{-1}\right)\le C_2$. 
    Let $z\in g\left(\R^{d}\right)\subset \R^{d}$ and let $\Phi_{z,M,c}^{v}$ be the hat function for the i-Conv-ResNet block. 
    Let $g_{z,v} = \left(\Phi_{z,M,c}^{v}+\id\right)\left(g\right)$.
    Then for $1\le p\le \infty$,
    \[
    cd^{-\frac{d}{p}}C_1^{-\frac{d}{p}}M^{-\frac{d}{p}} \le \left\|g_{z,+1} -g_{z,-1} \right\|_{L^{p}\left(\R^{d},\R^{d}\right)} \le c 2^{\frac{d}{p}+1}dC_2^{\frac{d}{p}}M^{-\frac{d}{p}}. 
    \]
    Moreover, when $p=\infty$,
    \[
    \left\|g_{z,+1}-g_{z,-1}\right\|_{L^{\infty}\left(\R^{d},\R^{d}\right)}=2c. 
    \]
\end{lemma}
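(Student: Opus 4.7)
The plan is to mirror the argument of Lemma \ref{lem:pm_hat}, with the support and magnitude estimates for $\vartheta_{z,M,c}$ replaced by the corresponding estimates for $\Phi_{z,M,c}^{v}$ stated in Lemma \ref{prpt:Phi}. The key structural observation is that $\Phi_{z,M,c}^{-1} = -\Phi_{z,M,c}^{+1}$ by the definition in \eqref{eq:Phi}, so pointwise $\|g_{z,+1}(x) - g_{z,-1}(x)\|_{\infty} = 2\|\Phi_{z,M,c}^{+1}(g(x))\|_{\infty}$. Hence
\[
\|g_{z,+1}-g_{z,-1}\|_{L^{p}(\R^{d},\R^{d})}^{p} \;=\; 2^{p}\int_{g^{-1}(\supp\,\Phi_{z,M,c}^{+1})}\|\Phi_{z,M,c}^{+1}(g(x))\|_{\infty}^{p}\,\di x,
\]
and the problem reduces to estimating this integral from above and below.

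For the upper bound, I would use Lemma \ref{prpt:Phi}\ref{em:supp_Phi_up}, which says $\supp\,\Phi_{z,M,c}^{+1}\subset\bigcup_{i=1}^{d}\bigl(zR^{i}+[-1/M,1/M]^{d}\bigr)$, together with Lemma \ref{lem:inv_domain} applied to each translate: $g^{-1}(zR^{i}+[-1/M,1/M]^{d})\subset g^{-1}(zR^{i})+[-C_{2}/M,C_{2}/M]^{d}$, a box of Lebesgue measure $(2C_{2}/M)^{d}$. Combining with the pointwise bound $\|\Phi_{z,M,c}^{+1}\|_{\infty}\le c$ and summing over the $d$ translates gives an integral bounded by $(2c)^{p}\,d\,(2C_{2}/M)^{d}$. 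Taking the $p$-th root and using $d^{1/p}\le d$ yields the claimed upper bound $c\,2^{d/p+1}\,d\,C_{2}^{d/p}M^{-d/p}$.

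For the lower bound, I would restrict the integration to the single box $z+[-1/(2dM),1/(2dM)]^{d}$, which is one of the $d$ sets appearing in Lemma \ref{prpt:Phi}\ref{em:supp_Phi_low} (take $i=d$, so $R^{i}=I$). On this box $\|\Phi_{z,M,c}^{+1}\|_{\infty}\ge c/2$. Applying the left-hand inclusion of Lemma \ref{lem:inv_domain} gives $g^{-1}(z)+[-1/(2dC_{1}M),1/(2dC_{1}M)]^{d}\subset g^{-1}\bigl(z+[-1/(2dM),1/(2dM)]^{d}\bigr)$, a box of measure $(dC_{1}M)^{-d}$. Hence the integral is at least $(c)^{p}(dC_{1}M)^{-d}$, and extracting the $p$-th root produces $c\,d^{-d/p}C_{1}^{-d/p}M^{-d/p}$, matching the lower bound in the statement.

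For the $p=\infty$ case, the argument is essentially the one used in Lemma \ref{lem:pm_hat}: since $z\in g(\R^{d})$, there is $x_{0}$ with $g(x_{0})=z$, and the first coordinate of $\Phi_{z,M,c}^{+1}(z)$ equals $\vartheta_{z,M,c}(z)=c$, so $\|g_{z,+1}(x_{0})-g_{z,-1}(x_{0})\|_{\infty}=2c$; combined with the uniform upper bound $\|\Phi_{z,M,c}^{v}\|_{\infty}\le c$ this yields equality. I do not anticipate a real obstacle here: the only nontrivial point is keeping track of the extra factor of $d$ in the upper bound, which arises precisely because $\Phi_{z,M,c}^{v}$ has support spread over $d$ disjoint translates rather than a single box as $\Theta_{z,M,c}^{v}$ did; all other computations are direct analogues of the proof of Lemma \ref{lem:pm_hat}.
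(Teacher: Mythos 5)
Your proposal is correct and follows essentially the same route as the paper's proof: the upper bound comes from covering $\supp\,\Phi_{z,M,c}^{+1}$ by the $d$ translated boxes from Lemma \ref{prpt:Phi}\ref{em:supp_Phi_up} and pulling back through Lemma \ref{lem:inv_domain}, and the lower bound comes from restricting to a single box around $z$ exactly as in \eqref{eq:low_b_hat_iResNet}. The only cosmetic difference is that the paper invokes the displayed inequality \eqref{eq:low_b_hat_iResNet} from the proof of Lemma \ref{lem:pm_hat} via the identity $(\Phi_{z,M,c}^{+1})_1=\vartheta_{z,M,c}$, whereas you re-derive that step directly; both amount to the same computation.
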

\begin{proof}
    We compute
    \begin{equation*}
        \begin{split}
            \left\|g_{z,+1} -g_{z,-1} \right\|_{L^{p}\left(\R^{d},\R^{d}\right)} & = \left(\int\left\|g_{z,+1}(x)-g_{z,-1}(x)\right\|_{\infty}^{p}\di x\right)^{\frac{1}{p}} \\
            & = 2\left(\int_{g^{-1}\left(\supp~ \Phi_{z,M,c}^{+1}\right)} \left\| \Phi_{z,M,c}^{+1}\left(g(x)\right) \right\|_{\infty}^{p}\di x \right)^{\frac{1}{p}}\\
            & \le 2 \left(\int_{g^{-1}\left(\supp~ \Phi_{z,M,c}^{+1}\right)} c^{p}\di x \right)^{\frac{1}{p}},
        \end{split}
    \end{equation*}
    since $\left\|\Phi_{z,M,c}^{+1}\left(z\right)\right\|_{\infty}\le c$ for all $z\in\R^{d}$. 
    If we combine \ref{em:supp_Phi_up} in the Lemma \ref{prpt:Phi}  and Lemma \ref{lem:inv_domain}, 
    \[
    g^{-1}\left(\supp~\Phi_{z,M,c}^{+1}\right) \subset \bigcup_{i=1}^{d} g^{-1}\left(zR^{i}\right) + \left[-\frac{C_2}{M},\frac{C_2}{M}\right]^d.
    \]
    Thus,
    \[
    \|g_{z,+1}-g_{z,-1}\|_{L^{p}\left(\R^{d},\R^{d}\right)}\le c 2^{\frac{d}{p}+1} d C_2^{\frac{d}{p}}M^{-\frac{d}{p}}.
    \] 
    
    Besides, the first element of the hat function for the i-Conv-ResNet block is the same as the hat function, $\ie$ $\left(\Phi_{z,M,c}^{+1}\right)_{1} = \vartheta_{z,M,c}$. 
    Therefore, we use \eqref{eq:low_b_hat_iResNet} in the proof of Lemma \ref{lem:pm_hat} and get 
    \[
    \|g_{z,+1}-g_{z,-1}\|_{L^{p}\left(\R^{d},\R^{d}\right)}\ge cd^{-\frac{d}{p}}C_1^{-\frac{d}{p}}M^{-\frac{d}{p}}.
    \]
\end{proof}

The challenge in recovering i-Conv-ResNets is similar to that of i-ResNets. The proof of the following theorem is nearly identical to the proof of Theorem \ref{thm:i-ResNets}, with the primary differences being the choice of the size of the support $\frac{1}{M}$ and the scaling constant $c$ for the hat function. This is due to the more complex support structure of the hat function for the i-Conv-ResNet block compared to that for the i-ResNet block.
\begin{theorem}\label{thm:i-Conv-ResNets}
    Let $d\in \N$, $1\le p\le \infty$, and $g:[0,1]^{d}\to g\left([0,1]^{d}\right)\subset \R^{d}$ be a bi-Lipschitz function. 
    Suppose 
    \[
    N = \left\{\mathcal{N}\mid \operatorname{Arc_{CNN}}(\mathcal{\mathcal{N}})=(d;1,2d,1,1; \sigma), \Lip\left(\mathcal{N}\right)<1, \left\|\kappa\right\|_{\sup}:=\max_{i=1,2,\cdots,d}|\kappa_{i}|\le 1, \text{ for all kernels $\kappa$ of $\mathcal{N}$}, \right\}
    \]
    and
    \[
    U = \left\{\left(\mathcal{N} + \id\right)\circ g\mid \mathcal{N}\in N\right\}.
    \]
    Then, 
    \[
    \err_{m}^{MC}\left( U,L^{p}\left([0,1]^{d},\R^{d}\right) \right)\ge Cm^{-\frac{1}{d}-\frac{1}{p}}, 
    \]
    where $C>0$ is independent of $m$.
\end{theorem}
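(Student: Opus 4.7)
The plan is to port the argument of Theorem~\ref{thm:i-ResNets} to the convolutional setting, replacing $\Theta_{z,M,c}^{v}$ by $\Phi_{z,M,c}^{v}$ and invoking Lemma~\ref{lem:pm_Lp_conv} in place of Lemma~\ref{lem:pm_hat}. As in the i-ResNet proof, I fix $C_1,C_2>0$ with $\Lip(g)\le C_1$ and $\Lip(g^{-1})\le C_2$, use Corollary~\ref{cor:value_domain} at $x^{*}=\tfrac{1}{2}\vec{1}$ to exhibit an explicit cube $g(x^{*})+\left[-\tfrac{1}{2C_2},\tfrac{1}{2C_2}\right]^{d}\subset g([0,1]^{d})$, and place inside it an equispaced grid $\Gamma$ of spacing $\tfrac{2}{M}$. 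The choice $c=\tfrac{1}{6dM}$ ensures $\Lip(\Phi_{z,M,c}^{v})<1$ by Lemma~\ref{prpt:Phi} and keeps every convolutional kernel entry of $\Phi_{z,M,c}^{v}$ bounded by $2cM\le 1$, hence $\Phi_{z,M,c}^{v}\in N$. The genuinely new choice, compared with the i-ResNet case, is to take the overall scale $M\asymp C_2(K_d m)^{1/d}$ with an enlarged dimensional constant $K_d$ pinned down in the next step.

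The key new ingredient is a counting estimate controlling how many grid points a single sample can invalidate. Call $z\in\Gamma$ \emph{invalidated} if some $g(x_{(i)})$ lies in $\supp~\Phi_{z,M,c}^{v}=\bigcup_{j=1}^{d}zR^{j}+\left[-\tfrac{1}{M},\tfrac{1}{M}\right]^{d}$, as given by Lemma~\ref{prpt:Phi}; this is equivalent to $z$ lying in $\bigcup_{j=1}^{d} g(x_{(i)})R^{-j}+\left[-\tfrac{1}{M},\tfrac{1}{M}\right]^{d}$, a union of $d$ axis-aligned cubes of side $\tfrac{2}{M}$, since the cyclic shift $R$ only permutes coordinates and thus preserves axis-aligned cubes. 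Each such cube meets at most a purely dimensional constant $\gamma_d$ of grid points at spacing $\tfrac{2}{M}$, so a single sample invalidates at most $d\gamma_d$ grid points. Choosing $K_d>2d\gamma_d+1$, for any sample set of cardinality $n\le 2m$ the subset $\tilde\Gamma\subset\Gamma$ of non-invalidated grid points satisfies $\#\tilde\Gamma\ge\#\Gamma-d\gamma_d n\ge m$.

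From here the proof is a transcription of the i-ResNet argument: for each $z\in\tilde\Gamma$ and $v\in\{\pm 1\}$, the perturbation $g_{z,v}=(\Phi_{z,M,c}^{v}+\id)\circ g\in U$ agrees with $g$ on every sample point, so every deterministic algorithm using $n$ samples returns the same value on $g_{z,+1}$ and $g_{z,-1}$; averaging the error over $\Gamma\times\{\pm 1\}$, restricting the sum to $\tilde\Gamma$, applying the triangle inequality, and then invoking the lower bound $\|g_{z,+1}-g_{z,-1}\|_{L^{p}([0,1]^{d},\R^{d})}\ge cd^{-d/p}C_1^{-d/p}M^{-d/p}$ from Lemma~\ref{lem:pm_Lp_conv} yields an average-case lower bound of order $cM^{-d/p}\asymp M^{-d/p-1}\asymp m^{-1/p-1/d}$, and Markov's inequality in the form \eqref{eq:S_0} lifts this to every $(\mathbf{A},\mathbf{m})\in\Alg_{m}^{MC}(U,L^{p}([0,1]^{d},\R^{d}))$ at the cost of a factor of $\tfrac{1}{2}$. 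The main obstacle is the counting step in the previous paragraph: this is the one place where the richer support structure of $\Phi_{z,M,c}^{v}$ enters substantively, and one must verify that $\gamma_d$ (and hence $K_d$) depends only on $d$, so that the scaling $M\asymp m^{1/d}$, and with it the conclusion $m^{-1/d-1/p}$, survives.
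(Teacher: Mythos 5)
Your proposal follows the paper's proof essentially step for step: same grid construction inside $g(x^*)+\left[-\tfrac{1}{2C_2},\tfrac{1}{2C_2}\right]^{d}$, same choice $c=\tfrac{1}{6dM}$, same invocation of Lemma~\ref{lem:pm_Lp_conv} in place of Lemma~\ref{lem:pm_hat}, and the same Markov-inequality step via \eqref{eq:S_0} to pass from deterministic to random algorithms. The only difference is bookkeeping: you track a dimensional overlap constant $\gamma_d$ and pick $K_d>2d\gamma_d+1$, while the paper simply sets $M=2C_2\bigl\lceil(3dm)^{1/d}\bigr\rceil$ and argues $\#\tilde\Gamma\ge\#\Gamma-dn\ge dm$ by viewing each of the $dn$ shifted samples $g(x_{(i)})R^{j}$ as removing one grid cell; both choices yield the same $m^{-1/d-1/p}$ rate, so this is the paper's argument with slightly more explicit constants.
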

\begin{proof}
    Let $C_1, C_2>0$ such that $\Lip\left(g\right) \le C_1$ and $\Lip\left(g^{-1}\right)\le C_2$.  
    Let $x^*=\frac{1}{2}\cdot \vec{1}\in \R^d$. Then $[0,1]^{d} = x^* + \left[-\frac{1}{2},\frac{1}{2}\right]^{d}$, and $g(x^*) + \left[-\frac{1}{2C_2},\frac{1}{2C_2}\right]^d \subset g\left([0,1]^{d}\right) \subset g(x^*) + \left[-\frac{C_1}{2},\frac{C_1}{2}\right]^d$ according to Corollary \ref{cor:value_domain}.

    We can choose constant $M$ as 
    \[
    M=2C_2\left\lceil \left(3dm\right)^{\frac{1}{d}}\right\rceil,
    \]
    that implies
    \[
    2C_2\cdot \left(3dm\right)^{\frac{1}{d}}\le M\le 4C_2\left(3dm\right)^{\frac{1}{d}}.
    \]
    Moreover, 
    \[
    \Gamma = g(x^{*})+\left(\frac{1}{M}-\frac{1}{2C_2}\right)\cdot \vec{1} + \mathcal{G} \subset g\left(x^{*}\right) + \left[-\frac{1}{2C_2},\frac{1}{2C_2}\right]^{d} \subset g\left(\left[0,1\right]^{d}\right),
    \]
    where $\mathcal{G}$ is a set of grid points as 
    \[
    \mathcal{G} = \frac{2}{M}\left\{0,1,\cdots, \left\lceil \left(3dm\right)^{\frac{1}{d}} \right\rceil -1 \right\}^{d}.
    \]
    Besides, 
    $\Gamma+\left[-\frac{1}{M},\frac{1}{M}\right]^{d}\subset g\left(x^{*}\right)+\left[-\frac{1}{2C_2},\frac{1}{2C_2}\right]^{d}\subset g\left([0,1]^{d}\right)$.

    The number of elements in $\Gamma$ satisfies
    $3dm \le \#\Gamma = \left(\left\lceil \left(3dm\right)^{\frac{1}{d}} \right\rceil\right)^{d}\le 2^{d}\cdot3dm$ since $\left(3dm\right)^{\frac{1}{d}}\le \left\lceil \left(3dm\right)^{\frac{1}{d}} \right\rceil \le 2\left(3dm\right)^{\frac{1}{d}}$.
    
    For every sampling set $\{\left(x_{(i)}, g(x_{(i)})\right)\}_{i=1}^{n}$ with $x_{(i)}\in [0,1]^{d}$ and $n\le 2m$, let 
    \begin{equation*}
        \begin{split}
            \tilde{\Gamma} & = \left\{  z\in \Gamma\mid \left(\bigcup_{j=1}^{d} zR^{j}+\left[-\frac{1}{M},\frac{1}{M}\right]^{d} \right)\cap \left\{ g(x_{(i)}\right)\}_{i=1}^{n} = \emptyset \right\} \\
            & = \left\{z\in \Gamma\mid z + \left[-\frac{1}{M},\frac{1}{M}\right]^{d}\cap \left(\bigcup_{j=1}^{d}\left\{g\left(x_{(i)}\right)R^{j}\right\}_{i=1}^{n}\right)=\emptyset \right\}.
        \end{split}
    \end{equation*}
    Then the number of elements in $\tilde{\Gamma}$ satisfies $\# \tilde{\Gamma} \ge \# \Gamma - dn\ge dm$. 
    
    Next, let $c=\frac{1}{6dM}$. We consider a set of functions $\left\{ 
    g_{z,v}\mid z\in \Gamma, v=\pm 1 \right\}\subset U$ with $g_{z,v} = \left(\Phi_{z,M,c}^{v}+\id\right)\left(g\right)$, where $\Phi_{z,M,c}^{v}$ is the hat function for the i-Conv-ResNet block as in the Definition \ref{def:hat_func_i-Conv-ResNet}. 
    Then for every deterministic algorithm $A\in\Alg_{n}\left(U,L^{p}\left(\left[0,1\right]^{d},\R^{d}\right)\right)$, the statement $A(g_{z,v}) = A(g)$ holds when $z\in \tilde{\Gamma}$, since $g_{z,v}\left(x_{(i)}\right) = g\left(x_{(i)}\right)$ for all $i=1,\cdots, n$.  
    The mean error of recovering $\left\{g_{z,v}\mid z\in \Gamma, v=\pm 1\right\}$ by $A$ is  
    \begin{equation*}
        \begin{split}
            \frac{1}{2\# \Gamma}\sum_{z\in \Gamma, v\in\pm 1}\left\|g_{z,v}-A\left(g_{z,v}\right)\right\|_{L^{p}\left(\left[0,1\right]^{d},\R^{d}\right)}
            & \ge \frac{1}{2\#\Gamma} \sum_{z\in\Gamma}\sum_{v\in \pm 1}\left\|g_{z,v}-A\left(g_{z,v}\right)\right\|_{L^{p}\left(\left[0,1\right]^{d}, \R^{d}\right)} \\
            &\ge \frac{1}{2\#\Gamma} \sum_{z\in\tilde{\Gamma}}\sum_{v\in\pm 1}\left\|g_{z,v}-A\left(g_{z,v}\right)\right\|_{L^{p}\left(\left[0,1\right]^{d},\R^{d}\right)}\\
            &\ge \frac{1}{2\#\Gamma}\sum_{z\in\tilde{\Gamma}}\sum_{v=\pm 1}\left\|g_{z,v}-A\left(g\right)\right\|_{L^{p}\left([0,1]^{d},\R^{d}\right)}\\
            &\ge \frac{1}{2\#\Gamma}\sum_{z\in\tilde{\Gamma}}\left\|g_{z,+1}-g_{z,-1}\right\|_{L^{p}\left([0,1]^{d},\R^{d}\right)}\\
            &\ge \frac{\# \tilde{\Gamma}}{2\# \Gamma} cd^{-\frac{d}{p}}C_1^{-\frac{d}{p}}M^{-\frac{d}{p}}\\
            &\ge \frac{1}{3\cdot 2^{d+1}}cd^{-\frac{d}{p}}C_1^{-\frac{d}{p}}M^{-\frac{d}{p}},
        \end{split}
    \end{equation*}
    where the fourth line is due to the triangle inequality, the fifth line is due to Lemma \ref{lem:pm_Lp_conv}, and the last line is due to the number of elements in set $\Gamma$ and $\tilde{\Gamma}$.  
    For a random algorithm $\left(\bf{A}, \bf{m}\right)\in \Alg_{m}^{MC}\left( 
    U, L^{p}\left([0,1]^{d},\R^{d}\right)\right)$, suppose the $\left(\mathbf{A},\mathbf{m}\right)$ is based on a probability space $\left(S,\mathcal{F},\mathbb{P}\right)$. Let $S_0=\left\{s\in S\mid \mathbf{m}\left(s\right)\le 2m\right\}$. Then  
    \begin{equation*}
        \begin{split}
            \sup_{u\in U}
            \int_{S}\left\|u-A_s(u)\right\|_{L^{p}\left([0,1]^{d},\R^{d}\right)}\di \mathbb{P}(s)
            & \ge \frac{1}{2\# \Gamma}
            \sum_{z\in \Gamma, v=\pm 1}
            \int_{S}\left\|g_{z,v}-A_s\left(g_{z,v}\right)\right\|_{L^{p}\left([0,1]^{d},\R^{d}\right)}\di \mathbb{P}(s)\\
            & \ge\frac{1}{2\#\Gamma} \sum_{z\in\Gamma,v\in\pm 1} \int_{S_{0}} \left\|g_{z,v}-A_{s}\left(g_{z,v}\right)\right\|_{L^{p}\left([0,1]^{d},\R^{d}\right)}\di \mathbb{P}\left(s\right)  \\
            & \ge \frac{1}{3\cdot 2^{d+1}}cd^{-\frac{d}{p}}C_1^{-\frac{d}{p}}M^{-\frac{d}{p}}\mathbb{P}\left(S_0\right) \\
            & \ge \frac{1}{3\cdot 2^{d+2}}cd^{-\frac{d}{p}}C_1^{-\frac{d}{p}}M^{-\frac{d}{p}} \\
            &\ge \frac{1}{3^2\cdot2^{d+3}} d^{-\frac{d}{p}-1}C_1^{-\frac{d}{p}}M^{-\frac{d}{p}-1}\\
            &\ge \frac{1}{3^2\cdot2^{d+3}} d^{-\frac{d}{p}-1}C_1^{-\frac{d}{p}}\left(4C_2\right)^{-\frac{d}{p}-1}\left(3dm\right)^{-\frac{1}{p}-\frac{1}{d}},
        \end{split}
    \end{equation*}
    where the fourth line is due to \eqref{eq:S_0}, the fifth line is due to the selection of $c=\frac{1}{6dM}$ and the last line is due to the selection of $M\le 4C_2\left(3dm\right)^{\frac{1}{d}}$. 
    Thus, the optimal error satisfies 
    \[
    \err_{m}^{MC}\left(U,L^{p}\left([0,1]^{d},\R^{d}\right)\right)\ge  \sup_{u\in U}\int_{S}\left\|u-A_s(u)\right\|_{L^{p}\left([0,1]^{d},\R^{d}\right)}\di \mathbb{P}(s) \ge Cm^{-\frac{1}{p}-\frac{1}{d}},
    \]
    where $C>0$ is independent of $m$. 
\end{proof}

\begin{remark}

    The lower bound of the optimal error is scaled by a factor related to the number of point samples $m$ and the dimension domain $d$. This factor is $m^{-\frac{1}{d}}$ when the norm under consideration is the $L^{\infty}\left([0,1]^{d},\R^{d}\right)$-norm. This lower bound, $m^{-\frac{1}{d}}$, highlights the curse of dimensionality, as the number of sample points must increase exponentially to achieve a linear reduction in the optimal error.

    Even under additional constraints, where the ResNets are not only invertible but also constructed using convolutional ResNet blocks, the optimal error remains lower-bounded by $Cm^{-\frac{1}{d}-\frac{1}{p}}$, further indicating the curse of dimensionality in the number of samples, particularly when $p=\infty$.
\end{remark}
\section{Summary and future work}\label{se:summary}

In this manuscript, we discussed the challenges of recovering an i-ResNet or i-Conv-ResNet in relation to the number of sample points. We demonstrated that the optimal error for a very restricted function set of i-ResNets or i-Conv-ResNets is lower-bounded by $Cm^{-\frac{1}{d}-\frac{1}{p}}$, where $m$ represents the average number of point samples used by random algorithms, $d$ denotes the dimension of the input data and $p$ specifies the considered norm, $L^{p}\left([0,1]^{d}, \R^{d}\right)$. 
Since most stochastic training algorithms employed in practice fall within this framework, such errors are generally in practice.

Many interesting topics remain to be explored. We list some of them below.
\begin{enumerate}
    \item Searching for a powerful and practically useful architecture that avoids the $\|\cdot\|_{\infty}$ curse of dimensionality.
    \item Identifying an alternative method to incorporate the hat function into a generative ResNet without altering the number of blocks.
    \item Establishing a specific method to achieve the optimal convergence rate when $p<\infty$.
\end{enumerate}

\section*{Acknowledgement}
Y.~L.\ acknowledges support from the State Scholarship Fund by the China Scholarship Council (CSC), 
P.~P.\ was supported by the Austrian Science Fund (FWF) [P37010]. 

\bibliographystyle{abbrv}
\bibliography{references}

\end{document}